\documentclass[journal]{IEEEtran}

\usepackage{url}
\usepackage{float}
\usepackage{acronym}
\usepackage{comment}
\usepackage{soul}
\usepackage[linesnumbered,ruled,vlined]{algorithm2e}
\usepackage{algpseudocode}
\usepackage{bm}% bold math
\usepackage{amsthm}
\usepackage{amsmath}
\usepackage{amssymb}
\usepackage{cleveref}
\newtheorem{theorem}{Theorem}
\newtheorem{prop}{Proposition}

\newtheorem{corollary}{Corollary}
\newtheorem{definition}{Definition}
\theoremstyle{definition}
\usepackage{xcolor}
\usepackage{cleveref}
\usepackage{multirow}
\usepackage{booktabs}
\usepackage{svg}

\usepackage{subcaption}
\usepackage{graphicx}

\usepackage{thmtools}
\usepackage{thm-restate} 
\def\R{\mathbb{R}}

\def\x{\mathbf{x}}
\def\X{\mathbf{X}}
\def\z{\mathbf{z}}
\def\Z{\mathbf{Z}}

\def\S{\mathbf{S}}
\def\V{\mathbf{V}}
\def\N{\mathbb{N}}
\def\n{\mathbf{n}}
\def\L{\mathcal{L}}

\def\threLi{LI}

\def\c{\mathbf{c}}
\def\C{\mathcal{C}}%cycle basis
\def\Cs{\mathbb{C}}%cycle basis set
\def\a{\mathbf{a}}
\def\H{\mathbf{H}}
\def\S{\mathbf{S}}
\def\J{\text{LNR}}
\def\r{r}
\def\dimx{n}
\def\dimz{m}
\def\time{T}% number of measurement

\def\noise{\boldsymbol{\varepsilon}}
\def\h{\boldsymbol{h}}

\def\eR{\mathbf{R}}

\def\p{\mathbf{p}} %parameter of branches 
\def\abs{\text{abs}}% absolute value
\def\obj{\mathbf{O}} % objective of optimization
\def\G{\mathcal{G}} 
\def\V{\mathcal{V}} 
\def\E{\mathcal{E}} 
\def\siIn{\zeta}
\def\unIn{\chi}

\def\v{v}
\def\e{e}
\def\nul{\mathbf{N}}
\def\A{\mathbf{A}}
\def\cyc{\mathcal{C}} % cycle

\def\thetaAE{\theta_{\text{AE}}}

\def\cc{\boldsymbol{u}}
\def\err{\boldsymbol{\epsilon}} % vector represents the error of estimated \
\def\EXP{\mathbb{E}} %expectation

\def\W{\mathcal{W}}

\acrodef{ai}[AI]{Artificial Intelligence}
\acrodef{bdd}[BDD]{Bad Data Detector} 
\acrodef{cnn}[CNN]{Convolutional Neural Network}
\acrodef{cps}[CPS]{Cyber-Physical System} 
\acrodef{edr}[EDR]{Enforced Dynamic Response} 
\acrodef{fdia}[FDIA]{False Data Injection Attack}
\acrodef{fft}[FFT]{Fast Fourier Transform}
\acrodef{gft}[GFT]{Graph Fourier Transform}
\acrodef{gsp}[GSP]{Graph Signal Processing}
\acrodef{imf}[IMF]{Intrinsic Mode Function}
\acrodef{laa}[LAA]{Load Altering Attack}
\acrodef{lstm}[LSTM]{Long Short-Term Memory}
\acrodef{ml}[DL]{Deep Learning}
\acrodef{mppt}[MPPT]{Maximum Power Point Tracking}
\acrodef{pca}[PCA]{Principal Component Analysis}
\acrodef{pcc}[PCC]{Point of Common Coupling}
\acrodef{pmu}[PMU]{Phasor Measurement Unit}
\acrodef{pscp}[PSCP]{Pseudo-null Space Conserved Data Preprocessing}
\acrodef{rom}[ROM]{Reduced Order Modeling}
\acrodef{std}[STD]{Standard Deviation}
\acrodef{state-estimation}[SE]{State-Estimation}
\acrodef{svd}[SVD]{Singular Value Decomposition}
\acrodef{vmd}[VMD]{Variational Mode Decomposition}
\acrodef{wse}[WSE]{Wavelet Singular Entropy}
\acrodef{ae}[AE]{Autoencoder}
\acrodef{relu}[ReLU]{Rectified Linear Unit}
\acrodef{pr-auc}[PR-AUC]{Area under the precision-recall curve}
\acrodef{csd}[CSD]{Cycle-Space Detector}
\acrodef{mcb}[MCB]{Minimum Cycle Basis}
\acrodef{if}[Isolation Forest]{Isolation Forest}
\acrodef{tls}[TLS]{Total Least Squares}

\title{Cycle-Space Informed Detection of Autoencoded Blind False Data Injection Attacks on Power Systems
}

\IEEEaftertitletext{}
\author{\IEEEauthorblockN{
\textbf{Xin Li}\IEEEauthorrefmark{1},
\textbf{Chenhan Xiao}\IEEEauthorrefmark{2}, 
\textbf{Jonathan Cohen}\IEEEauthorrefmark{1}, 
\textbf{Aviad Elyashar}\IEEEauthorrefmark{3}\IEEEauthorrefmark{4}, 
\textbf{Yang Weng}\IEEEauthorrefmark{2},
\textbf{Rami Puzis}\IEEEauthorrefmark{1}\IEEEauthorrefmark{4}
}

\IEEEauthorblockA{\IEEEauthorrefmark{1}}Faculty of Computer and Information Science, Ben-Gurion-University, Be'er Sheva, Israel \\
\IEEEauthorblockA{\IEEEauthorrefmark{2}School of Electrical, Computer and Energy Engineering, Arizona State University, AZ, USA}\\
\IEEEauthorblockA{\IEEEauthorrefmark{3}Department of Computer Science, Shamoon College of Engineering, Be'er Sheva, Israel}\\
\IEEEauthorblockA{\IEEEauthorrefmark{4}Cyber@BGU, Ben-Gurion University of Negev, Be'er Sheva, Israel}\\
}

\begin{document}

\maketitle
\begin{abstract}
The rapid growth of AI-driven data centers and large-scale energy storage systems is increasing the reliance of power system operation on real-time measurement data and automated decision-making. However, many existing detection methods rely on statistical or data-driven analysis of measurements and can fail when attackers exploit the same data structure to craft stealthy perturbations. To illustrate this limitation, we demonstrate a blind \ac{fdia} in which an autoencoder learns the measurement manifold and generates perturbations aligned with the Jacobian null space, thereby allowing the attack to evade both residual-based bad-data detectors and time-series anomaly detectors.  To mitigate data-driven FDIAs which exploit the null space, we propose a topology-informed \ac{csd} that leverages the Cycle-Space of the network to impose structural constraints that enhance null space estimation. In addition, we prove that by using the \ac{mcb}, the proposed \ac{csd} achieves the optimal generalization error for attack detection. By exploiting topology-derived cycle constraints rather than relying solely on numerical null space estimation, the proposed method does not require precise line parameters and improves the separation between normal and attacked measurements. Simulation results on IEEE 14-, 30-, 57-, and 118-bus systems demonstrate that the proposed method effectively detects data-driven FDIAs under realistic measurement noise.

\end{abstract}

\begin{IEEEkeywords}
False data injection attacks, Autoencoder, Power systems, Cyber-physical systems, Cycle-Space
\end{IEEEkeywords}

%All manuscripts must be in English. {\em Manuscripts must not exceed 10 pages, single-spaced and double-columned}.  This includes all graphs, tables, figures and references. These guidelines include complete descriptions of the fonts, spacing, and related information for producing your proceedings manuscripts. Please follow the steps outlined below when preparing your final manuscript. {\em Read the following carefully. The quality of the finished product largely depends upon receiving your cooperation and help at this particular stage of the publication process.}

\section{Introduction}
\label{sec:introduction}
Modern power system operation increasingly relies on real-time measurement data, especially with the rapid growth of AI-driven data centers and large-scale energy storage systems \cite{swain2022sensor}. 
However, integrating cyber infrastructure also introduces new vulnerabilities into power systems. %These incidents include the 2015 Ukraine blackout \cite{lightsout}, the 2018 intrusion into the U.S. power grid \cite{cnn2018us}, and recent cyberattacks targeting North American energy entities \cite{cyberattacknews}, 
Recent reports show that cyberattacks targeting energy infrastructure are rapidly increasing, with attacks on U.S. utilities rising by nearly 70\% between 2023 and 2024 \cite{ferc2025summer}. 
At the same time, emerging grid assets, such as utility-scale battery energy storage systems, are becoming new potential targets for cyber intrusions \cite{li2023cybersecuritybess}. 
These risks are particularly significant as more grid data is processed through cloud platforms and third-party services \cite{rasner2021cybersecurity}. Notably, a recent cyberattack on the Polish power grid highlighted the stealth and sophistication of such threats \cite{certpolska2026energy}. %In such environments, ensuring the integrity of measurement data used for system monitoring and state estimation is critical for reliable grid operation. 

One common attack is the false data injection attack (FDIA), which manipulates measurement data used in state estimation. Carefully constructed FDIAs exploit the null space of the measurement Jacobian to bypass traditional bad data detectors (BDDs). 
This allows attackers to alter estimated system states without triggering residual-based alarms~\cite{liu2011false}. 
For example, successful attacks can mislead system operators and lead to severe operational consequences, such as cascading outages \cite{hong2021data}, transmission congestion \cite{paul2022modified}, and economic disruption \cite{rahman2014impact}. %As power systems become increasingly data-driven and automated, the potential impact of corrupted measurement data continues to grow.

Early research on FDIA primarily focused on model-based attacks, in which adversaries assume knowledge of system parameters, such as network topology and line admittances. 
Under this assumption, adversaries can construct attack vectors aligned with the column space of the measurement Jacobian, preserving residual statistics and evading BDD-based detection \cite{liu2011false}. 
Subsequent work investigated techniques for reconstructing system information from measurements to relax the assumption of full model knowledge. 
These methods include Independent Component Analysis (ICA) \cite{esmalifalak2015stealthy}, Principal Component Analysis (PCA) \cite{yu2015blind, anwar2016stealthy}, and low-rank matrix factorization \cite{yang2023false}, which attempt to estimate the Jacobian structure from historical measurements.

More recent studies have explored data-driven approaches for constructing blind FDIAs that do not require explicit knowledge of system parameters. 
This shift is motivated by the limited accessibility of physical system information: grid operators increasingly restrict access to sensitive infrastructure data \cite{anderson2022power}, and in practice, even system operators may lack accurate and up-to-date system models due to evolving infrastructure, third-party assets, and infrequent model updates \cite{basit2020limitations, rasner2021cybersecurity}. 
To address these limitations, blind FDIA methods attempt to infer system structure directly from measurements. 
Examples include eigenvalue-based matrix reconstruction \cite{yang2022blind, lakshminarayana2020data} and deep learning approaches, such as autoencoders \cite{musleh2022attack} and generative adversarial networks (GANs) \cite{shahriar2021iattackgen}. 
While these techniques demonstrate that stealthy attacks can be constructed solely from measurement data, they also expose a key limitation of many existing detection strategies. 
When both attack generation and detection rely primarily on numerical or data-driven analysis of the same measurements, the resulting symmetry makes it difficult to distinguish malicious perturbations from normal system variability reliably.

To expose the limitations of existing detection methods, we study a blind FDIA generated by a tailored autoencoder (AE) that learns the measurement manifold from historical data and produces perturbations aligned with the Jacobian null space. Such perturbations remain close to the learned measurement manifold and can bypass both residual-based bad data detectors (BDDs) and time-series anomaly detectors. These observations highlight the difficulty of detecting attacks using purely statistical or data-driven analysis of measurements. 

To address this challenge, we propose a topology-informed detection framework, the \acf{csd}, that leverages the Cycle-Space structure of the power network to constrain null-space estimation and detect data-driven FDIAs.
In DC power systems, the Cycle-Space, defined as the kernel of the incidence matrix, captures the fundamental loops whose structure is closely related to the null space of the measurement Jacobian. 
By incorporating these topology-derived constraints, the proposed method introduces structural information that cannot be inferred solely from measurement data.
The method constructs a cycle basis for the network and uses cycle-consistent indicators to guide null-space estimation, enabling reliable detection even when precise line parameters are unavailable.

To the best of our knowledge, this work is the first to bridge graph-theoretic Cycle-Space theory with power system state estimation to provide a topology-informed defense against blind \ac{fdia}s.

As a supporting analytical result, we derive a first-order finite-sample expression for the expected generalization error of the null-space estimator. %under isotropic Gaussian design.
Building on the classical first-order subspace perturbation framework of \cite{swewart1977perturbation,li2002performance}, we derive a compact, closed-form expression for the generalization error in terms of $\text{rank(}\H)$ and the sample size $\time_o$.
While classical subspace-estimation quantities, such as \ac{tls} covariance \cite{fierro1993total}, DOA parameter MSE \cite{li2002performance}, and singular-vector overlaps \cite{benaych2011eigenvalues}, are well established, the train/test functional we target and its closed form have not, to our knowledge, been derived previously. While presented here in the context of power system cybersecurity, the result is mathematically general and offers a new analytical tool for the \ac{tls} community.
This analysis leads to Corollary 1, which shows that the \ac{mcb} minimizes the generalization error among all cycle-basis null-space estimators.

Simulation studies on IEEE 14-, 30-, 57-, and 118-bus transmission systems demonstrate that the proposed detection framework effectively identifies data-driven FDIAs under realistic measurement noise.
The results show that incorporating topology-derived cycle constraints improves the separation between normal and attacked measurements compared with purely data-driven anomaly detection approaches.

The remainder of the paper is organized as follows. Section~\ref{sec:preliminaries} reviews DC state estimation and BDD fundamentals. Section~\ref{sec:AE-method} describes the autoencoder-based blind FDIA scheme. Section~\ref{sec:cycle_informed_method} presents the proposed Cycle-Space-informed detection framework. Section~\ref{sec:evaluation} reports the numerical evaluation results, and Section~\ref{sec:conclusion} concludes the paper.

\section{Preliminaries}
\label{sec:preliminaries}

We adopt the DC state estimation formulation. The DC power grid is modeled as a graph $\G=(\V,\E)$ where $|\V|=n$ denotes the set of buses and $|\E|=m$ the set of branches. Let $\x = (x_1, \cdots, x_{\dimx}) \in \mathbb{R}^{\dimx}$ denote the system state vector (e.g., voltage phase angles at buses), and $\z = (z_1, \cdots, z_{\dimz}) \in \mathbb{R}^{\dimz}$ denote the system measurement vector (e.g., active power flows on branches). The linearized DC power flow model relates them through the system Jacobian matrix $\H \in \mathbb{R}^{\dimz \times \dimx}$, which encodes network topology and measurement configuration:
\(
    \z = \H \x + \noise, 
\)
where $\noise \in \mathbb{R}^{\dimz}$ represents measurement noise due to sensor error or communication disturbances. The noise is typically modeled as zero-mean Gaussian, i.e., $\noise \sim \mathcal{N}(\boldsymbol{0}, \eR)$, where $\eR = \mathrm{diag}(\sigma_1^2, \cdots, \sigma_{\dimz}^2)$ is the noise covariance matrix. 
Based on such a relationship above, power system state estimation estimate the state $\hat{\x}$ from measurements $\z$ by solving the weighted least-squares problem:
\vspace{-0.5em}
\begin{align}
    \hat{\x} &= \arg\min_{\x} \sum\nolimits_{i=1}^{\dimz} \frac{(z_i - \H_i \x)^2}{\sigma_i^2} \label{eq:SE}\\
    &=(\H^T \eR^{-1} \H)^{-1} \H^T \eR^{-1} \z, \label{eq:SE-closed}
\end{align}
where $\H_i$ is the $i$-th row of $\H$. The estimated state $\hat{\x}$ help decide critical monitoring and control decisions, making it a high-value target for adversarial attacks. 
To verify the measurement data $\z$, a widely used approach is Bad Data Detection (BDD) \cite{abur2004power,wang2020detection}. BDD evaluates the consistency between observed measurements and the estimated state by examining the squared residual error:
\(
    \|\z - \hat{\z}\|_2^2 = \|\z - \H \hat{\x}\|_2^2 = \|\S \z\|_2^2,
\)
where $\S = \mathbf{I} - \H(\H^T \eR^{-1} \H)^{-1} \H^T \eR^{-1}$ is the residual sensitivity matrix~\cite{abur2004power}. 
Under the assumption of Gaussian noise, the normalized residual
\begin{equation}
    \J(\z) := \sum\nolimits_{i=1}^{\dimz} \frac{(\S_i \z)^2}{\sigma_i^2} = \sum\nolimits_{i=1}^{\dimz} \frac{(\S_i \noise)^2}{\sigma_i^2} \sim \chi^2_{\dimz - \dimx}, 
    \label{eq:Chi-squared}
\vspace{-0.5em}
\end{equation}
follows a $\chi^2$ distribution with $\dimz - \dimx$ degrees of freedom. This property enables a statistical detection rule: for a given significance level $\alpha$ (e.g., 0.05), the residual statistic $\J(\z)$ is compared with the threshold $\tau = \chi^2_{\dimz - \dimx,\, 1-\alpha}$. If $\J(\z) \geq \tau$, the measurement set is flagged as containing bad data; otherwise, it is deemed consistent with the assumed noise model.

\section{Blind FDIA via Measurement Manifold Learning}
\label{sec:AE-method}

Traditional FDIA schemes assume knowledge of system information, in particular the Jacobian matrix $\H$. Under this assumption, an adversary can modify the measurement vector as $\z' = \z + \H\cc$~\cite{liu2011false}, where \( \cc \in \mathbb{R}^{\dimx} \) is an arbitrarily chosen direction in the state space. Because the perturbation $\H\cc$ lies in the column space of \( \H \), the attacked measurement preserves the residual error, i.e., \( \| \S (\z + \H\cc) \| = \| \S \z \| \), allowing the attack to remain undetected by the \ac{bdd}. However, this model-based approach has two important limitations. First, access to the exact Jacobian matrix is often unrealistic in practice because such information is sensitive and typically protected by utility operators. Second, although these attacks evade \ac{bdd}, they may still introduce abnormal measurement patterns that deviate from historical data, making them vulnerable to modern time-series anomaly detectors \cite{zu2024self}.

To address these limitations, we construct a blind FDIA that does not require explicit knowledge of the Jacobian matrix. The key observation is that under the DC model \( \z = \H\x \), the measurement manifold corresponds to the column space of \( \H \). Therefore, if an attacker can learn this manifold from historical measurements, stealth attack directions can be recovered without explicit knowledge of the system model. Based on this insight, we employ a tailored autoencoder (AE) to learn the measurement manifold directly from historical system data.

To ensure that the \ac{ae} faithfully captures the structure of the measurement manifold, we design its latent space dimensionality to match the number of system states \( \dimx \). While the exact Jacobian matrix \( \H \) is typically unavailable to attackers, the value of \( \dimx \) can often be inferred from system metadata or estimated using model selection and dimensionality reduction techniques. This architectural choice forces the AE to encode only the minimal set of variables required to reconstruct valid measurements, aligning the learned representation with the physical notion of system states in power systems. 
% In Section~\ref{sec:AE-latentdimension}, we numerically validate that the proposed blind \ac{fdia} scheme performs best when the latent space dimensionality coincides with the state space.

Moreover, the \ac{ae} architecture naturally performs a denoising operation. Since the observed measurements follow the noisy model \( \z = \H \x + \noise \), where \( \noise \) denotes Gaussian sensor noise, the AE is incentivized to recover the signal component \( \H \x \), while suppressing the noise \( \noise \). Similar to linear \acp{ae}, which are mathematically equivalent to \ac{pca} \cite{baldi1989neural}, the latent representation captures directions of maximum variance in the data and filters out noise components. This property is useful for constructing stealthy perturbations that remain close to the learned measurement manifold.

\begin{figure}[h]
    \centering
    \vskip -0.1in
    \includegraphics[width=0.65\linewidth]{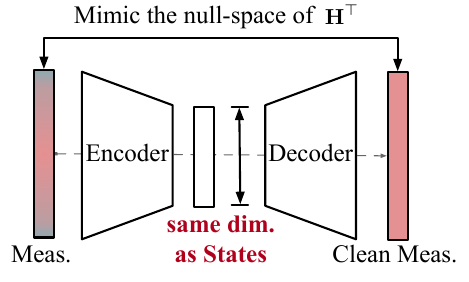}
    \vskip -0.1in
    \caption{Customized autoencoder architecture designed to mimic the state estimation process and implicitly learn the null space of the system Jacobian matrix \( \H^T \).}
    \label{fig:AE}
    \vskip -0.1in
\end{figure}

The proposed \ac{ae} architecture is illustrated in Fig.~\ref{fig:AE}. It consists of an encoder that maps input measurements \( \z \in \mathbb{R}^{\dimz} \) to a lower-dimensional latent representation, and a decoder that reconstructs the measurement from the latent space \cite{goodfellow2016deep}. The \ac{ae} model \( \rm{AE}(\cdot; \thetaAE) \), where \( \thetaAE \) denotes the network parameters, is trained to minimize the reconstruction loss
\begin{equation}\label{eq:AEmodel}
    \min_{\thetaAE} \mathbb{E}_{\z}\left\| \z - \rm{AE}\left(\z; \thetaAE\right) \right\|^2.
\end{equation}

After the \ac{ae} model in~\cref{eq:AEmodel} converges, the attacker can generate an FDIA as follows (see Fig.~\ref{fig:reconstruct}). Given a real-time measurement \( \z_t \), the \ac{ae} reconstructs it as \( \hat{\z}_t = \rm{AE}(\z_t) \). The reconstruction residual
\(
\r_t := \z_t - \hat{\z}_t
\)
captures the component of \( \z_t \) that lies outside the learned manifold and therefore approximately lies in $\text{Col}(\H)^{\perp}$, where $\text{Col}(\H):=\{\H\x|\x\in\R^{\dimx}\}$ denotes the column space of $\H$. This orthogonal complement of $\text{Col}(\H)$ serves as a proxy for the system’s null space. The attacker then perturbs the real-time measurement along the residual direction
\begin{equation}\label{eq:AE-residual-FDIA}
\z'_t = \z_t + \kappa \cdot \r_t,
\end{equation}
where \( \kappa > 0 \) controls the attack magnitude. Since $\hat{\z}_t$ lies on the learned manifold, the perturbed measurement \( \z'_t \) remains close to historical measurement patterns, reducing the risk of detection by time-series anomaly detectors. Moreover, because $\r_t \perp \text{Col}(\H)$, it approximately lies in the null space of the sensitivity matrix \( \S \), i.e., \( \S \r_t \approx 0 \), which implies that
\(
\S \z'_t = \S \z_t + \kappa \S \r_t \approx \S \z_t.
\)
Thus, the attack preserves the residual used in BDD and successfully bypasses BDD-based detection.

\begin{figure}[h]
    \centering
    \vskip -0.1in
    \includegraphics[width=0.9\linewidth]{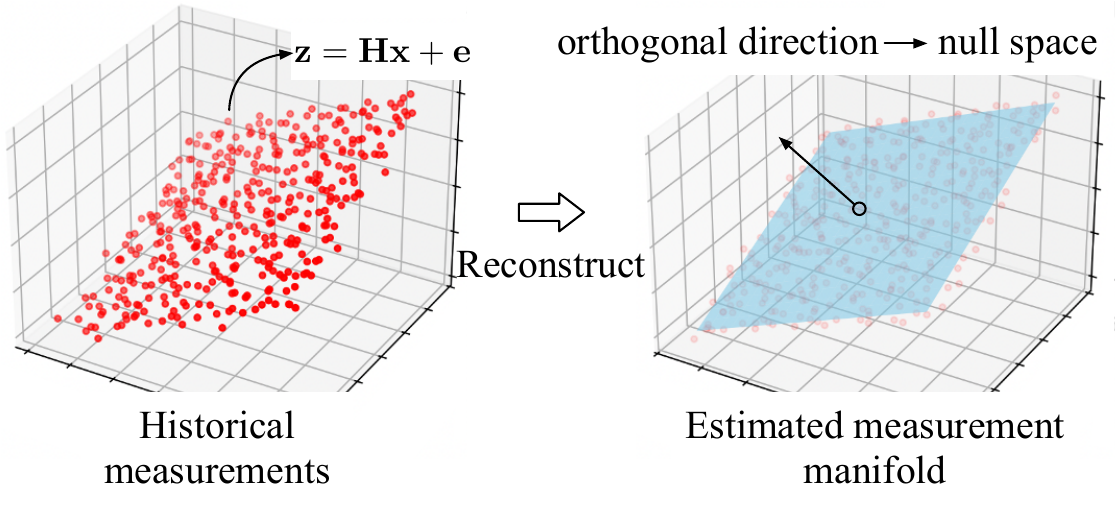}
    \vskip -0.1in
    \caption{Reconstruction of measurement manifold from historical observed measurements.}
    \label{fig:reconstruct}
    \vskip -0.1in
\end{figure}

To further reduce the detectability of the attack and ensure statistical indistinguishability from normal measurement noise, we add a Gaussian perturbation \( \boldsymbol{\eta}_t \sim \mathcal{N}(\boldsymbol{0}, \gamma \eR) \), where \( \gamma \in (0,1) \) controls the variance. The final injected measurement becomes
\(
\z'_t = \z_t + \kappa \cdot (\r_t + \boldsymbol{\eta}_t).
\)
This additional perturbation captures sensor-level variability and helps the attack remain hidden even under fine-grained time-series anomaly detection methods. Unlike traditional FDIA constructions that rely on explicit system matrices, the proposed scheme infers stealth directions directly from historical data without access to \( \H \).

\subsection{Pseudo-Null Space FDIA extended to AC Systems}
\label{sec:attack-AC}
While the above FDIA is formulated under the linear DC model, the underlying concept generalizes to the nonlinear AC setting, where the measurement model becomes $\z=\h(\x)+\noise$. In this case, due to the slack bus constraint and physical conservation laws, the image of $\h(\cdot)$ forms a lower-dimensional nonlinear manifold within the measurement space. Compared to DC system based FDIA, constructing stealthy attacks in this setting requires identifying perturbations $\c_p$ that satisfy $\a = \h(\x+\c_p)- \h(\x)\approx 0$, thereby preserving the measurement consistency necessary to evade \ac{ml}-based anomaly detection. One strategy for achieving this condition is to formulate a nonlinear optimization problem that directly minimizes the measurement discrepancy: $\min_{\c_p}\|\h(\x+\c_p)- \h(\x)\|^2$. However, solving such nonlinear optimization problem can be computationally intensive and impractical for large-scale systems or real-time applications.

To address this challenge, we adopt an alternative strategy based on a first-order Taylor approximation of $\h(\cdot)$. Linearizing the measurement function around $\x$ yields $\h(\x+\c_p)\approx \h(\x)+\boldsymbol{J}(\x)\c_p$, where $\boldsymbol{J}(\x)$ denotes the Jacobian matrix of $\h$ evaluated at $\x$. Stealthiness can then be enforced by selecting $\c_p\in \text{null}(\boldsymbol{J}(\x))$, ensuring that the perturbation lies in the local linear kernel of the measurement function and thus approximately preserves measurement consistency. This Jacobian-based method is computationally efficient, scalable, and effective when the perturbations are sufficiently small to remain within the linear approximation region. Notably, this procedure closely parallels the approach used in the DC case. The key difference lies in the fact that, under the AC model, the Jacobian $\boldsymbol{J}$ must be re-evaluated at each operating point for FDIA construction, whereas the DC formulation involves a constant Jacobian $\H$. In practice, the update frequency of $\boldsymbol{J}$ is inherently determined by the sampling rate of the system measurements.

\section{Cycle-Space-Based Detection of Data-Driven Blind \ac{fdia}}
\label{sec:cycle_informed_method}

\begin{figure*}
\centering
\vskip -0.10in
\includegraphics[width=0.7\linewidth]{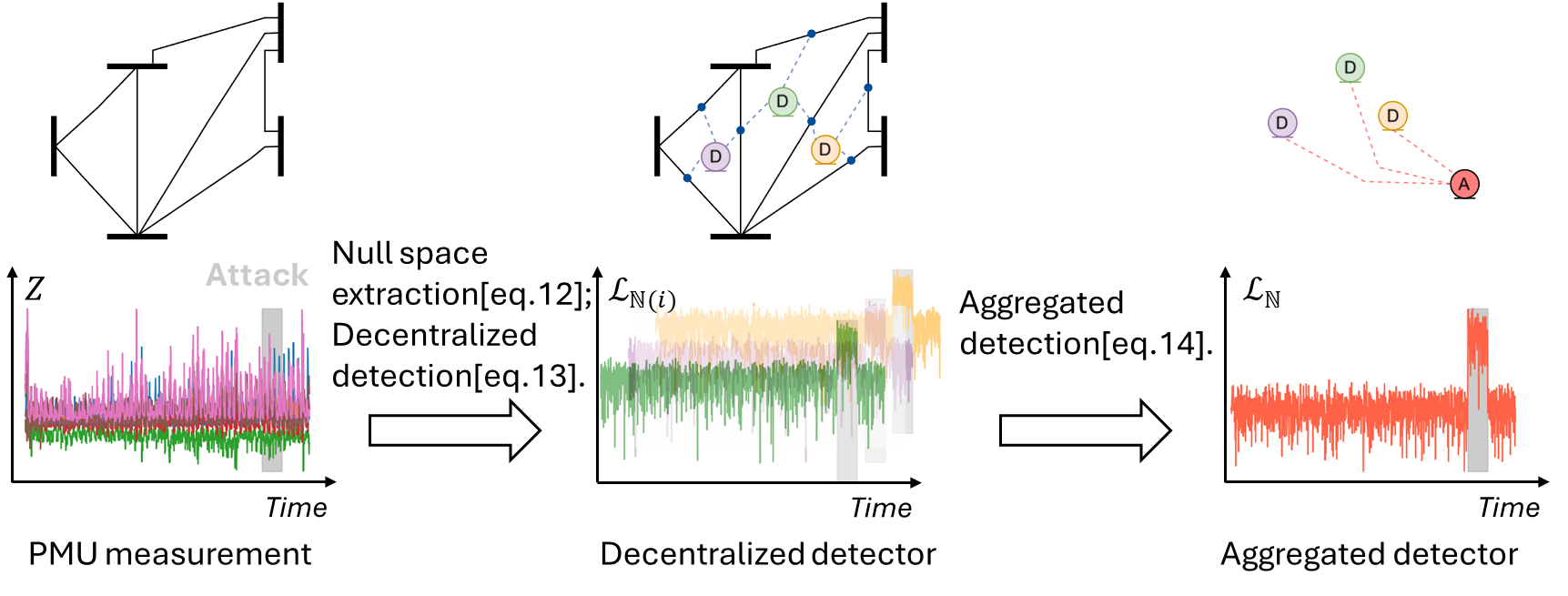}
\caption{Overview of the proposed Cycle-Space informed FDIA detection procedure.}
\label{fig:pipeline}
\vskip -0.10in
\end{figure*}

The blind FDIA in Section~\ref{sec:AE-method} approximates the null space of the Jacobian matrix $\H^T$ using historical measurements. While this data-driven construction enables the attack to bypass both BDDs and time-series anomaly detectors, its null space approximation is inherently imperfect. As a result, the injected perturbation does not lie exactly in the true null space, leaving a detectable footprint.
This observation motivates the proposed detection method. Existing null space-based detection schemes typically rely on purely numerical tools such as Singular Value Decomposition (SVD) to estimate the null space of $\H^T$. However, such methods use only measurement data and do not exploit the physical structure of the power network. Consequently, they provide limited advantage to the defender when the attacker also relies on data-driven null space approximation. In contrast, system operators typically have access to the network topology. We leverage this structural information to construct a more physically meaningful approximation of the null space and use it for attack detection.

In DC power systems, the non-trivial null space of the Jacobian matrix arises from cycles in the network graph. In particular, radial networks do not contain closed loops and therefore have a trivial Cycle-Space. This motivates the use of Cycle-Space theory for null space reconstruction. The resulting topology-informed approach offers two advantages. First, unlike numerical decomposition methods such as SVD, it does not rely on precise values of internal system parameters such as line admittances. Second, it naturally supports decentralized detection because each cycle can serve as an independent local consistency check. Fig.~\ref{fig:pipeline} provides an overview of the proposed detection framework, which consists of two components: (1) Cycle-Space informed null space extraction, and (2) decentralized attack detection.

\subsection{Cycle-Space Informed Null Space Extraction}

We first introduce the graph-theoretic objects used to construct the topology-informed null space.

\begin{definition}[Cycle]\label{def:cycle}
Let $\G = (\V, \E)$ be a connected graph with vertices (buses) $v_i \in \V$ and edges (branches) $e_i \in \E$. A \emph{cycle} is a closed walk of the form $[\v_1, \e_1, \v_2, \ldots, \e_{l-1}, \v_l]$ such that $v_1 = v_l$ and all other vertices are distinct. Equivalently, we represent a cycle by the set of edges as $\c=\{\e_1, \e_2, \ldots, \e_{l-1}\}$.
\end{definition}

The dimension of the null space of a graph's incidence matrix corresponds to the number of independent cycles in the graph. We now extend this concept to the algebraic representation of all independent cycles.

\begin{definition}[Cycle basis]\label{def:cycle_basis}
A cycle basis $\cyc=\{\c_1,\c_2,\ldots,\c_{m-n+1}\}$ is a minimal set of simple cycles that can be combined to generate every cycle in the graph. Each cycle $\c \in \cyc$ is associated with a signed indicator vector $\siIn_{\c}\in \mathbb{R}^m$, which encodes the orientation of an edge in the cycle relative to its orientation in the graph:
\begin{equation}
\label{eq:weighted_signed_indicator}
\siIn_{\c}(\e) =
\begin{cases} 
    1, & \text{if $\e\in\c$ and its orientations in $\c$} \\ &\text{and $\G$ are the same} \\
   -1, & \text{if $\e\in\c$ and its orientations in $\c$} \\ &\text{and $\G$ are different} \\
   0, & \text{if $\e\notin \c$ } 
\end{cases}
\end{equation}
where $e\in\E$ denotes an edge in the graph $\G$.
\end{definition}

The cycle basis is generally non-unique. The cycle basis set $\Cs=\{\cyc^{(1)}, \cyc^{(2)},\ldots\}$ denotes the collection of all such cycle bases.

\begin{definition}[Cycle-Space]
\label{def:weighted_cycle_space}
The Cycle-Space of the graph $\G$ under cycle basis $\C$ is defined as $\nul_c=\siIn\odot\p =(\n^{(1)}_c,\n^{(2)}_c,\ldots,\n^{(\dimz-\dimx+1)}_c)\in\R^{\dimz\times|\cyc|}$,
where $\p$ is the parameter of the branch (i.e., the reactance).

Since the Cycle-Space coincides with the null space of $\H^T$, i.e., $\nul_c=Null(\H^T)$, each direction of $\nul_c$ admits:
\begin{equation}
    \bar\Z^T\n^{(i)}_c=0
\end{equation}
where $\bar\Z$ is clean measurement data without noise and attack.
\end{definition}

The orthogonal complement is the cut space:

\begin{equation}\nul_c^{\perp}= Range(\H).\end{equation}

Our goal is to extract the null space of the matrix $\H^T$ from measurement $\Z$. The null space of the matrix $\H^T$ is defined as $\nul=(\n^{(1)},\n^{(2)},\ldots,\n^{(\dimz-\dimx+1)})$. Each $\n^{(i)}\in\R^{\dimz}$ is a direction of the null space. The null space reconstruction problem is usually formulated as the unconstrained optimization
\begin{equation}
\label{eq:cycle_optimization1}
\begin{aligned}
\obj= \min_{\nul(i)} \ & \sum\nolimits_i ||\Z_{train}^T\cdot \nul(i)||^2
\\
\text{s.t.} \ & \quad ||\n^{(i)}||^2=1.
\end{aligned}
\end{equation}

Based on Definition~\ref{def:weighted_cycle_space}, we reformulate the problem by constraining the null space using the cycle structure:
\begin{equation}
\label{eq:cycle_optimization2}
\begin{aligned}
\obj= \min_{\nul(i)} \ & \sum\nolimits_i ||\Z^T\cdot \nul(i)||^2
\\
\text{s.t.} \ & \quad ||\nul(i)||^2=1,
\quad \nul= \unIn_{\c_i} \odot \nul,
\end{aligned}
\end{equation}
where $\Z=\{\z_1,\ldots,\z_{\time_o}\}\in\R^{\dimz\times\time_o}$ contains the noisy measurements, $\time_o\geq\dimz$ is the number of measurements considered for optimization, $\unIn_{\c}=\abs(\siIn_\c)$ is the unsigned indicator, and $\unIn_{\c}\odot \nul$ regulates the dimensions of the null space spanned by each cycle, where $\odot$ denotes element-wise multiplication. We define the selection $\Z_{\c_i}$, which selects the branch data from $\Z$ that belongs to cycle $\c_i$, such that the dimension of the extracted results is $\R^{|\c_i|\times\time_o}$.
Thus, solving this optimization problem is equivalent to optimizing each cycle separately:
\begin{equation}
\label{eq:cycle_optimization_sub}
\begin{aligned}
\obj_{i} = \min_{\nul_{\c_i}(i)} \ & \sum\nolimits_i \left\| \Z_{\c_i}^T \cdot \nul_{\c_i}(i) \right\|^2 \\
\text{s.t.} \ & \left\| \nul_{\c_i}(i) \right\|^2 = 1,
\end{aligned}
\end{equation}
where $\nul_{\c_i}(i)$ is the optimization entity with dimension $\R^{|\c_i|\times1}$, and $\nul_{\c_i}(i)$ is the $i$-th basis selecting branches that belong to $\c_i$. For brevity, we denote $\nul_{\c_i}(i)$ as $\nul_{\c_i}$.

The solution of \cref{eq:cycle_optimization_sub} is composed of the null space of each selection. Because the rank of $\Z_{\c_i}$ is $|\c_i|-1$, there is only one null space vector. The solution is the eigenvector of $\Z_{\c_i}^T\Z_{\c_i}$ that corresponds to the smallest eigenvalue:
\begin{equation}
\label{eq:cycle_optimization_sub_solution}
\hat{\nul}_{\c_i}(i)= \text{eigvec}_{min}(\Z_{\c_i}\Z_{\c_i}^T)
\end{equation}
After obtaining all $\hat{\nul}_{\c_i}(i)$, we recover the full estimated null space, $\hat{\nul}$, which can be directly used for \ac{fdia} detection. 
Another advantage of the Cycle-Space null space reconstruction is the minimum number of measurements required, namely the number of links of the largest cycle, $\max|\c|$, in contrast to other subspace learning methods, which require at least $\text{rank}(\H)$ measurements with $\max|\c|\leq\text{rank}(\H)$.

By mapping the null space of the Jacobian matrix $H^T$ directly to the Cycle-Space of the network graph, we introduce a novel structural constraint. This represents the first application of the Cycle-Space formalism, grounded in the network topology, to isolate malicious data injections in power grid systems.
Based on~\cref{eq:cycle_optimization_sub,eq:cycle_optimization_sub_solution}, we next construct a bi-level decentralized detection framework.

\subsection{Decentralized Attack Detector}

The Cycle-Space construction naturally supports decentralized detection. In \cref{eq:cycle_optimization_sub}, $\Z_{\c_i}$ contains the measurements of the branches in cycle $\c_i$, while $\nul_{\c_i}$ is the null space basis associated with that cycle. Since the null space originates from the Cycle-Space, attack detection can also be carried out independently on each cycle. Therefore, each cycle can be assigned to a local detector that collects the corresponding branch measurements and performs detection in parallel.

Figure~\ref{fig: 2-layer-structure} illustrates this architecture for the IEEE 14-bus system after removing leaf branches that do not belong to any cycle. The D-detectors (blue nodes) represent local cycle-based detectors, while the blue dashed lines indicate the information flow from branch measurements to the local detectors. The A-detector (red node) aggregates the local detector outputs, as indicated by the red dashed lines.

Since the suspicious components lie in the null space, we project the measurements onto the estimated null space in order to extract the null space component. Let $\bar{\Z}=\{\bar{\z}_1,\ldots,\bar{\z}_{\time_u}\}$ be the measurements under consideration, where $\time_u$ is the number of measurements. The null space component extracted by the detector associated with cycle $\c_i$ is
\begin{equation}
\label{eq:reconstruction_vector_local}
\N_t(i)=(\hat{\nul}_{\c_i}\hat{\nul}_{\c_i}^+\bar{\Z}_{\c_i}(t))^T= \bar{\Z}_{\c_i}^T\hat{\nul}_{\c_i}^T\hat{\nul}_{\c_i}
% \vspace{-1em}
\end{equation}
where $\hat{\nul}_{\c_i}^+ \in \R^{1\times|\c_i|}$ is the pseudo inverse of matrix $\hat{\nul}_{\c_i}$, and $\hat{\nul}_{\c_i}\hat{\nul}_{\c_i}^+$ projects onto the null space component.

Next, we define the magnitude of $\N(i)$ as $\L_{\N(i)}=(\|\N_{1}(i)\|_2, \ldots, \|\N_{\time_u}(i)\|_2)$. We observe that $\L_{\N(i)}$ is smaller under normal operating conditions and therefore provides good separation between normal and attacked measurements. We use $\L_{\N(i)}$ as the reconstruction error of the $i^{\text{th}}$ cycle.

\begin{figure}[h]
    \centering
    \includegraphics[width=0.6\linewidth]{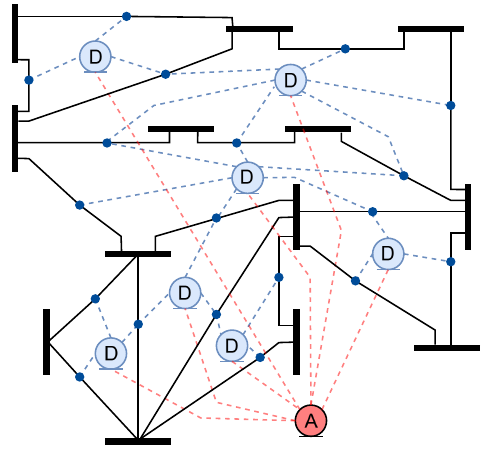}
    \caption{The structure of the decentralized detectors (D-nodes, blue) and aggregated detector (A-node, red) for the IEEE 14-bus system. }
    \label{fig: 2-layer-structure}
\end{figure}

In this two-layered protection system, each decentralized detector can independently detect the attack while also sending a reconstruction error to the aggregated detector (see Fig.~\ref{fig: 2-layer-structure}). We compute the aggregated reconstruction error as 
\begin{equation}
    \L_{\N}=\sqrt{\sum\nolimits_i^{|\cyc|} \L_{\N(i)}^2}.
\end{equation}

To detect the attack interval, if the data $\bar{\Z}$ contains attacked measurements, the reconstruction error should contain two separate classes corresponding to normal and attacked samples. We therefore adopt the minimum cross-entropy criterion \cite{li1993minimum,li1998iterative} to iteratively determine the threshold $\tau_{\N(i)}$ that separates these two classes, by setting $\tau_{\N(i)} = \threLi(\L_{\N(i)})$ \cite{li1993minimum,li1998iterative}. If $\L_{\N_t(i)} \geq \tau_{\N(i)}$, then the detector raises an alarm. Note that $\threLi()$ computes the threshold in linear time and is therefore suitable for real-time implementation.

The experimental detection results in Section~\ref{sec:evaluation} empirically validate that the null space of the Jacobian matrix aligns with the Cycle-Space of the network. Furthermore, in Section~\ref{sec:exp-compare-detectors}, we compare the performance of the proposed Cycle-Space-informed detection framework against state-of-the-art \ac{ml}-based anomaly detectors under the proposed blind \ac{fdia}. In addition, in Section~\ref{sec:exp-agg_vs_single} we evaluate the contribution of each individual decentralized cycle-based detector (D-detector), assessing its standalone detection capability relative to the aggregated detection outcome.

{
\subsection{Optimal Generalization Error}

To analyze the optimality of the proposed \ac{csd}, we derive a finite-sample expression for the expected generalization error using first-order subspace perturbation analysis~\cite{swewart1977perturbation,li2002performance}.

Let:
\begin{itemize}
    \item  $\time_\star$ denote the number of samples in all test sets.
    \item $x_t\in \mathbb{R}^{\dimx}$ be independent and identically distributed (i.i.d.) random state vectors drawn from an isotropic standard normal distribution: $x_t \sim \mathcal{N}(0, I_n)$. And $X_\star=[ x_1,x_2, \dots ]\in \mathbb{R}^{n \times \time_\star}$.
    \item $\Z_\star=\H \X_\star+ \E_\star \in \mathbb{R}^{\dimz \times \time_\star}$ be the noisy test set measurements, produced from independent state and noise realizations $\X_\star$ and $\E_\star$ respectively. $\bar{\Z}_\star=\H \X_\star$ is the noise-free test set measurement.
    \item $\err=\hat{\n}-{\n}$ be the training null space estimation error.
    \item  $ E_{\text{gen}}$ be the scalar values indicating the Expected Generalization errors.
    \item  $\EXP_\Z$ denote an expectation over the training measurements $\Z$. 
    \item  $\EXP_{{\z} \sim \Z_\star}$ denote an expectation over a random sample drawn from the test measurements $\bar{\Z}_\star$.
\end{itemize}

The Expected Generalization Error~\cite[p.34]{james2013introduction}, denoted as $ E_{\text{gen}}$, represents the expected residual error of the estimated null vector $\hat{\n}$ when applied to the underlying distribution of the noisy measurements. $ E_{\text{gen}}$ is defined as the average error that the estimator $\hat{\n}$ incurs over all possible system measurement realizations. It can be regarded as the limit of the $\|\z^T \hat{\n}\|^2$ error as the size of the test set $\Z_\star$ approaches infinity ($\time_\star \to \infty$), ensuring the empirical average converges to the true expectation. Consequently, we adapt the definition of $ E_{\text{gen}}$ to the null space estimation problem as follows:
\begin{equation}
    \label{eq:gen_error}
    \begin{aligned}
     E_{\text{gen}}  =\EXP_{\Z} \left[ \EXP_{\z \sim \Z_\star} [ \|\z^T \hat{\n}\|^2 ] \right]= \lim_{\time_\star \to \infty}  \frac{\EXP_{\Z} \left[  \| \Z_\star^T \hat{\n} \|^2 \right]}{\time_\star} .
    \end{aligned}
\end{equation}

Plugging $\hat{\n}= \n+\err$ into \cref{eq:gen_error}, we can derive the simplified equation (see \Cref{sec:gen_error_simle} for a detailed derivation):

\begin{equation}
\label{eq:final_generalization}
E_{\text{gen}}=\sigma^2+\lim_{\time_\star\rightarrow\infty}\frac{\EXP_\Z[\|\Z_\star^T\err\|^2+2\n^T\E_\star \E_\star^T\err]}{\time_\star}
 \end{equation}

The presence of $\err$ in \cref{eq:final_generalization} underscores the need to investigate its behavior in depth. Hence, we next derive $\err$ analytically in closed form.

The estimation error $\err$ is a nonlinear function of the noise realization $\E$, and its exact distribution does not admit a tractable closed form.
%In the high-dimensional regime $\dimx, \time_o \to \infty$ with $\dimx/T_0\in (0,1)$, the squared overlap $|\langle \hat \n, \n\rangle|^2$ exhibits the Baik–Ben Arous–Péché phase transition (BBP) \cite{baik2005phase, benaych2011eigenvalues}, showing that exact null-vector recovery is not achievable in finite samples.
In the high-dimensional regime $\dimx, T_0 \to \infty$ with $\dimx/T_0 \in (0,1)$, the squared overlap $|\langle \hat{\mathbf{n}}, \mathbf{n} \rangle|^2$ undergoes a phase transition \cite{baik2005phase, benaych2011eigenvalues}, suggesting that exact null-vector recovery is generally not attainable with finite samples.
We therefore use a first-order Lagrangian expansion, valid for small noise variance, and empirically verify its accuracy.
In this setting, higher-order terms $\E^T\err$ and $\err^T\err$ are neglected for simplicity.

\begin{restatable}{prop}{Mprop}[First Order Lagrangian Solution]
\label{eq:prop_delta_solution}
The first order solution for the vector $\err$ which minimizes $\|\Z^T\hat{\n} \|^2$ is given by:
\begin{equation}
\begin{aligned}
    \err =& -(\bar\Z^T)^{+} \E^T \n
\end{aligned}
\label{eq:delta_solution}
\end{equation}
\end{restatable}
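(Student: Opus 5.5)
Write the training data as $\Z=\bar\Z+\E$ with $\bar\Z=\H\X$, so that the true null vector satisfies $\bar\Z^T\n=0$ and $\|\n\|=1$. Substituting $\hat{\n}=\n+\err$ into the constrained problem~\cref{eq:cycle_optimization_sub} (or its unconstrained analogue~\cref{eq:cycle_optimization1}) gives the Lagrangian
\begin{equation*}
\mathcal{J}(\err,\lambda)=\|(\bar\Z+\E)^T(\n+\err)\|^2-\lambda\left(\|\n+\err\|^2-1\right).
\end{equation*}
Using $\bar\Z^T\n=0$, the residual expands as $\Z^T\hat{\n}=\E^T\n+\bar\Z^T\err+\E^T\err$. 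In line with the first-order regime stated just before the proposition, I will drop $\E^T\err$ and $\err^T\err$. This leaves the linearized residual $\E^T\n+\bar\Z^T\err$.

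The plan rests on three simplifications of this first-order problem.
\begin{enumerate}
\item The normalization constraint linearizes to $2\n^T\err=0$, so $\err$ is restricted to $\n^\perp$.
\item The multiplier $\lambda$ equals the minimal eigenvalue of $\Z\Z^T$. This eigenvalue is itself of order $\|\E\|^2$, so $\lambda\err$ is second order and the multiplier term contributes nothing at first order.
\item Since $\text{Col}(\bar\Z)=\text{Col}(\H)=\n^\perp$ (the rank of $\bar\Z_{\c_i}$ is $|\c_i|-1$ on a cycle), the constraint $\err\perp\n$ is exactly the requirement $\err\in\text{Col}(\bar\Z)$.
\end{enumerate}

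The problem therefore reduces to the linear least-squares problem $\min_{\err}\|\bar\Z^T\err+\E^T\n\|^2$ over $\err\in\text{Col}(\bar\Z)$. Its normal equations are $\bar\Z\bar\Z^T\err=-\bar\Z\E^T\n$. The minimum-norm solution, which is automatically in $\text{Col}(\bar\Z)$ and hence satisfies the constraint, is
\begin{equation*}
\err=-(\bar\Z\bar\Z^T)^{+}\bar\Z\E^T\n=-(\bar\Z^T)^{+}\E^T\n,
\end{equation*}
by the identity $(\bar\Z^T)^+=(\bar\Z\bar\Z^T)^+\bar\Z$. This is exactly~\cref{eq:delta_solution}.

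As a cross-check, I would derive the same expression from classical first-order eigenvector perturbation~\cite{swewart1977perturbation}. Take $\hat{\n}$ as the bottom eigenvector of $\bar\Z\bar\Z^T+\bar\Z\E^T+\E\bar\Z^T+\E\E^T$. The first-order eigenvector correction is $-(\bar\Z\bar\Z^T)^+(\bar\Z\E^T+\E\bar\Z^T)\n=-(\bar\Z\bar\Z^T)^+\bar\Z\E^T\n$, since $\bar\Z^T\n=0$ kills the second term. This agrees with the Lagrangian result.

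The main obstacle is justifying the discarded terms. The stationarity condition carries $\lambda\err$ and $\E\E^T\err$, and these must be shown to be of higher order than the retained terms. The retained terms scale like $\|\E\|\,\|\bar\Z\|$, whereas $\lambda\err$ and $\E\E^T\err$ are $O(\|\E\|^2\|\err\|)$. The argument therefore needs a small-noise / large-signal assumption, namely that $\sigma$ is much smaller than the smallest nonzero singular value of $\bar\Z$, which is the regime the paper announces. Under that assumption, and with $\text{rank}(\bar\Z)=\dimz-1$ per cycle so that the pseudoinverse is well defined, the first-order solution is unique in $\n^\perp$.
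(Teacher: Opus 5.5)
Your proposal is correct and follows essentially the paper's route. You drop $\E^T\err$ and $\err^T\err$, linearize the normalization to $\n^T\err=0$, solve the normal equations $\bar\Z\bar\Z^T\err=-\bar\Z\E^T\n$ with the Moore--Penrose pseudoinverse, and apply $(\bar\Z\bar\Z^T)^{+}\bar\Z=(\bar\Z^T)^{+}$.

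The paper differs in two small ways. It puts the multiplier on the already-linearized constraint and shows it is exactly zero via $\n^T\bar\Z=0$, instead of arguing as you do that $\lambda=\lambda_{\min}(\Z\Z^T)$ is higher order. It also discards the homogeneous term $(I-(\bar\Z\bar\Z^T)^{+}\bar\Z\bar\Z^T)y$ rather informally. Your nullity-one (per-cycle) identification $\n^{\perp}=\mathrm{Col}(\bar\Z)$ is exactly what makes that discard legitimate and the first-order solution unique.
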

See the proof of Proposition~\ref{eq:prop_delta_solution} in \Cref{sec:lagrangian_proof}.  
Since $\EXP[\E^T]= 0_{\time_o \times m}$, the expectation of $\err$ is also ${0}$.
Plugging \cref{eq:delta_solution} into \cref{eq:final_generalization}, we aim to cancel out $\Z_\star,\err,\n$ and $\E_\star$.

Next, we adapt the classical subspace-perturbation machinery of \cite{swewart1977perturbation,li2002performance} to the expected generalization error $\EXP_{\Z} \left[ \EXP_{\z \sim \Z_\star} [ \|\z^T \hat{\n}\|^2 ] \right]$.
While related quantities \ac{tls} coefficient covariance \cite{fierro1993total}, parameter MSE for subspace-based DOA estimators \cite{li2002performance}, and asymptotic singular vector overlaps \cite{benaych2011eigenvalues} are well established, the specific closed-form expression for the expected generalization error of the null space estimator, to our knowledge, has not been derived in prior subspace-estimation work.
Using the orthogonal projection property~\cite{swewart1977perturbation}, namely that $\H\H^T (\H\H^T)^+ = P_{\mathrm{Col}(\H)}$, we establish the relationship between generalization error and $\text{rank}(\H)$.

\begin{restatable}{theorem}{Mtheory}[First Order Finite Sample Expected Generalization Error]
The first-order finite sample expected generalization error is given by:
\begin{equation}
\boxed{
 E_{\text{gen}}=\sigma^2(1+
 \frac{\mathrm{rank}(\H)}{\time_o-n})
}
\label{eq:pop_error}
\end{equation}
\label{eq:theo_pop_error}
\end{restatable}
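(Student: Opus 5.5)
Starting from \cref{eq:final_generalization}, I substitute the first-order solution of Proposition~\ref{eq:prop_delta_solution}, $\err=-(\bar\Z^T)^{+}\E^T\n$, and evaluate the two correction terms separately. Throughout I take the noise to be isotropic, $\E,\E_\star$ with i.i.d.\ $\mathcal{N}(0,\sigma^2)$ entries, and $\n$ a unit vector with $\H^T\n=0$. I also use that training and test realizations $(\X,\E)$ and $(\X_\star,\E_\star)$ are mutually independent. The cross term $2\n^T\E_\star\E_\star^T\err$ then vanishes in expectation: conditional on the training data, $\err$ is fixed, and $\EXP[\E_\star\E_\star^T]=\time_\star\sigma^2 I$. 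This gives $2\sigma^2\time_\star\,\EXP_\Z[\n^T\err]$, which is zero because $\err$ is linear in $\E$ with $\EXP[\E]=0$.

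For the quadratic term, I condition on the training data and write $\|\Z_\star^T\err\|^2=\err^T\Z_\star\Z_\star^T\err$. Since $\EXP[\Z_\star\Z_\star^T]=\time_\star(\H\H^T+\sigma^2 I)$, dividing by $\time_\star$ and letting $\time_\star\to\infty$ yields $\EXP_\Z[\err^T\H\H^T\err]+\sigma^2\EXP_\Z\|\err\|^2$. The second piece is $O(\sigma^4)$, a higher-order term of the same type as $\err^T\err$ that the paper already neglects, so I drop it. What remains is $\EXP_\Z\|\H^T\err\|^2$. Conditioning now on $\X$, the vector $\E^T\n$ is $\mathcal{N}(0,\sigma^2 I_{\time_o})$ because $\|\n\|=1$. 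Writing $\bar\Z=\H\X$ and $M=(\bar\Z^T)^{+}$, this gives $\EXP\|\H^T\err\|^2=\sigma^2\,\EXP_\X\,\mathrm{tr}(\H^T M M^T\H)$.

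The main step, and the expected obstacle, is evaluating this trace. I use $MM^T=(\bar\Z\bar\Z^T)^{+}=(\H\X\X^T\H^T)^{+}$. I then take a thin factorization $\H=UR$, with $U\in\R^{\dimz\times r}$ orthonormal, $r=\mathrm{rank}(\H)$, and $R$ of full row rank. This gives $(\H\X\X^T\H^T)^{+}=U(RWR^T)^{-1}U^T$, where $W=\X\X^T$ is Wishart$(I_n,\time_o)$. The trace becomes $\mathrm{tr}(R^TR\,(RWR^T)^{-1}R^TR)$, or equivalently $\mathrm{tr}(P\,W_r^{-1})$ with a projection-like structure after whitening. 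The orthogonal projection property $\H\H^T(\H\H^T)^+=P_{\mathrm{Col}(\H)}$ is what should collapse the $R$ factors into a projection of rank $r$. I would then invoke the inverse-Wishart mean, $\EXP[W^{-1}]=I/(\time_o-n-1)$, applied to the relevant $n$-dimensional (or $r$-dimensional) block. This yields $r/(\time_o-n-1)$.

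Matching the stated $\time_o-n$ rather than $\time_o-n-1$ is where I expect difficulty. I would check whether the first-order bookkeeping (e.g.\ replacing $\EXP[W^{-1}]$ by $(\EXP W)^{-1}$-type corrections at leading order, or the degrees of freedom lost to the $n$-dimensional state fit) gives $\time_o-n$ exactly, and otherwise accept agreement to leading order in $1/\time_o$. Combining everything gives $E_{\text{gen}}=\sigma^2+\sigma^2\,\mathrm{rank}(\H)/(\time_o-n)$, which is \cref{eq:pop_error}.
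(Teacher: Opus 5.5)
Your reduction to $\sigma^2\,\EXP_\X\,\mathrm{tr}\bigl(\H\H^T(\H\X\X^T\H^T)^{+}\bigr)$ matches the paper, including your handling of the cross term and of the dropped $O(\sigma^4)$ term. The gap is the final step, which you flag but do not close.

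The $\dimx$-dimensional inverse-Wishart mean $\EXP[(\X\X^T)^{-1}]=I_\dimx/(\time_o-\dimx-1)$ cannot be used here. With $\H=UR$ and $R\in\R^{r\times\dimx}$, $r<\dimx$, the matrix $(R\X\X^TR^T)^{-1}$ is not a linear function of $(\X\X^T)^{-1}$. Your trace $\mathrm{tr}(R^TR\,(R\X\X^TR^T)^{-1}R^TR)$ is also dimensionally ill-formed; it should be $\mathrm{tr}\bigl(R^T(R\X\X^TR^T)^{-1}R\bigr)$.

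The pseudoinverse depends on $\X$ only through the $r\times\time_o$ matrix $R\X$, whose columns are i.i.d.\ $\mathcal{N}(0,RR^T)$. Hence $R\X\X^TR^T\sim\mathrm{Wishart}_r(\time_o,RR^T)$ and $\EXP[(R\X\X^TR^T)^{-1}]=(RR^T)^{-1}/(\time_o-r-1)$. This gives the trace exactly as $r/(\time_o-r-1)$. The paper does the same computation in Corollary~\ref{cor:cov_err}, with $\mathbf{V}_r^T\X$ in place of $R\X$. It obtains $\mathrm{Cov}(\err)=\frac{\sigma^2}{\time_o-\dimx}(\H\H^T)^{+}$ and then applies $\mathrm{Tr}\bigl(\H\H^T(\H\H^T)^{+}\bigr)=\mathrm{rank}(\H)$.

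The missing ingredient is the hypothesis $\mathrm{rank}(\H)=\dimx-1$, i.e.\ the reference-angle redundancy of the DC model, which the paper imposes in that corollary. With $r=\dimx-1$ one has $\time_o-r-1=\time_o-\dimx$ exactly, and the boxed formula follows with no further approximation.

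None of your proposed repairs can replace this hypothesis. Replacing $\EXP[W^{-1}]$ by $(\EXP W)^{-1}$ gives $1/\time_o$, not $1/(\time_o-\dimx)$. Agreement to leading order in $1/\time_o$ does not prove the stated finite-sample expression. Your value $r/(\time_o-\dimx-1)$ is the correct first-order answer only when $r=\dimx$, and in that case the boxed formula itself is off by the $-1$. So the rank condition is genuinely needed for the statement as written.
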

See the proof of Theorem~\ref{eq:theo_pop_error} in \Cref{sec:gen_error_proof}.
An experimental validation of \cref{eq:pop_error} can be found in \Cref{sec:gen_error_vali}.

To adapt \cref{eq:pop_error} into the Cycle-Space method, we proceed as follows. Each cycle $c$ of the power-system network yields a submatrix $\H_c^T$ whose null space dimension satisfies:
\begin{equation}
\mathrm{nullity}(\H_c^T) = 1.
\end{equation}
Let $\n_c$ denote the null space estimation of cycle $c$. Applying \cref{eq:pop_error} on each $\H_c$ yields:
\begin{equation}
 E_{\text{gen}}
=
\sigma^2(1+\frac{(|c|-1)}{\time_o-|c|}).
\end{equation}
For a cycle basis $C$, the total expected generalization error is:
\begin{equation}
\label{eq:cycle_basis_pop_error}
\sum\nolimits_{c\in\mathcal{C}}
\sigma^2(1+\frac{(|c|-1)}{\time_o-|c|}).
\end{equation}

Since the denominator decreases and the numerator increases with $|c|$, minimizing generalization error reduces to minimizing total cycle length $|c|$.

\begin{restatable}{coro}[Optimality of \acf{mcb}]
The \ac{mcb} \cite{kavitha2007ano}
\begin{equation}
\label{eq:MCB_optimal}
\mathcal{C}^{\mathrm{mcb}}
= \arg\min_{\mathcal{C}} \sum\nolimits_{c\in\mathcal{C}} |c|
\end{equation}
achieves the minimum generalization error among all Cycle-Space estimators.
\end{restatable}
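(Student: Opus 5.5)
The plan is to reduce the corollary to a combinatorial statement about cycle bases, using the per-cycle error already obtained from Theorem~\ref{eq:theo_pop_error}. By \cref{eq:cycle_basis_pop_error}, a cycle basis $\mathcal{C}$ has total expected generalization error
\begin{equation*}
F(\mathcal{C})=\sum\nolimits_{c\in\mathcal{C}} f(|c|),\qquad f(k):=\sigma^2\Bigl(1+\frac{k-1}{\time_o-k}\Bigr).
\end{equation*}
Two structural facts will drive the argument. First, every cycle basis has exactly $m-n+1$ elements (Definition~\ref{def:cycle_basis}), so all summands are comparable term by term. Second, $f$ is strictly increasing and convex on $3\le k<\time_o$, since $f(k)=\sigma^2\bigl(1+\frac{\time_o-1}{\time_o-k}-1\bigr)$; the standing assumption $\time_o\ge\dimz\ge|c|$ keeps us in this range. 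Therefore the goal is to show that the basis minimizing $F$ is the one minimizing $\sum|c|$, i.e.\ $\mathcal{C}^{\mathrm{mcb}}$ in \cref{eq:MCB_optimal}.

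Because $f$ is nonlinear, minimizing $\sum|c|$ does not by itself imply minimizing $\sum f(|c|)$. I would instead invoke the matroid structure of the cycle space. Cycle bases over GF(2) are the bases of a linear matroid on the set of simple cycles, so the greedy algorithm (de Pina/Horton, as in \cite{kavitha2007ano}) produces an MCB. The key matroid fact to use is that a greedy basis minimizes \emph{every} monotone nondecreasing weight function of the element weights, and in fact is a ``Gale-optimal'' basis. That is, if both the MCB lengths and any other basis's lengths are sorted increasingly as $\ell^{\mathrm{mcb}}_1\le\dots\le \ell^{\mathrm{mcb}}_{N}$ and $\ell_1\le\dots\le\ell_N$, then $\ell^{\mathrm{mcb}}_j\le\ell_j$ for every $j$. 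Applying the increasing function $f$ termwise and summing gives $F(\mathcal{C}^{\mathrm{mcb}})\le F(\mathcal{C})$ for every $\mathcal{C}\in\Cs$, which is the claim.

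The main obstacle is establishing this componentwise domination, because the corollary as written only supplies the weaker "minimum total length" property. I would prove it by the standard exchange argument. Suppose $j$ is the first index with $\ell^{\mathrm{mcb}}_j>\ell_j$. The $j$ cycles $c_1,\dots,c_j$ of $\mathcal{C}$ of length at most $\ell_j$ are independent, while the first $j-1$ MCB cycles span only a $(j-1)$-dimensional space. Hence some $c_i$ is independent of them, and swapping it in for the $j$-th MCB cycle strictly lowers the total length while preserving a basis, contradicting minimality. If multiple MCBs exist, they share the same sorted length vector by this same argument, so the optimum value is well defined.

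Finally, I would remark that the conclusion is only as exact as the first-order formula of Theorem~\ref{eq:theo_pop_error}. Optimality is therefore optimality within the first-order model, with $\sigma^2$ and $\time_o$ held common across cycles.
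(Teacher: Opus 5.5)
Your proof is correct, and it takes a genuinely different and more complete route than the paper.

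\textbf{The paper's argument.} Its entire justification is the sentence before the corollary: the per-cycle term $\sigma^2\bigl(1+\frac{|c|-1}{\time_o-|c|}\bigr)$ is increasing in $|c|$, so ``minimizing generalization error reduces to minimizing total cycle length.''

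\textbf{What your route adds.} As you observe, monotonicity alone does not justify that step, because the per-cycle error is nonlinear (convex) in $|c|$. Outside a matroid, a length profile $\{3,7\}$ beats $\{5,6\}$ in total length, yet with $\time_o=8$ it has the larger $\sum f$. Your Gale-optimality lemma is exactly the missing ingredient: any minimum-total-length basis of the cycle matroid has a sorted length vector componentwise dominated by that of every other basis. It also gives more than the paper states:
\begin{itemize}
\item the conclusion holds for any per-cycle error that is nondecreasing in $|c|$, so it does not depend on the exact first-order formula of Theorem~\ref{eq:theo_pop_error};
\item every MCB attains the same error value.
\end{itemize}
The paper's route buys only brevity, at the cost of an unjustified step.

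\textbf{Repair 1: the exchange step.} Replacing specifically the $j$-th MCB cycle by $c_i$ need not yield a basis, because the coefficient of that cycle in the expansion of $c_i$ may vanish. Instead:
\begin{itemize}
\item write $c_i=\sum_k\alpha_k m_k$ over the MCB cycles $m_k$;
\item note that $c_i\notin\mathrm{span}(m_1,\dots,m_{j-1})$ forces $\alpha_k\neq 0$ for some $k\ge j$;
\item swap out that $m_k$; since $\ell^{\mathrm{mcb}}_k\ge\ell^{\mathrm{mcb}}_j>\ell_j\ge|c_i|$, the total length still strictly decreases.
\end{itemize}

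\textbf{Repair 2: the field.} The paper defines cycles through signed indicator vectors in $\R^{\dimz}$, so the natural matroid is the linear matroid over $\R$, not GF(2). The two notions of minimum cycle basis need not coincide. Your exchange argument never uses the field or the greedy algorithm, so it goes through verbatim. You only need $\mathcal{C}^{\mathrm{mcb}}$ to be the minimizer over the same class of bases you compare against, so drop or adjust the GF(2)/Horton framing.
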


Based on \cref{eq:MCB_optimal}, we can achieve optimal generalization error by decomposing the topology into \ac{mcb}.
The time complexity of computing \ac{mcb} is $O(\dimz^2\dimx)$ and is incurred only once under a fixed topology.
}

\section{Experiments}\label{sec:evaluation}

This section evaluates the effectiveness of the proposed framework from several complementary perspectives. 
First, we validate that the proposed autoencoder (AE) successfully learns the measurement manifold and enables the construction of stealthy blind FDIA attacks. 
Second, we evaluate the ability of the proposed Cycle-Space-informed detection framework to identify such attacks and compare its performance with existing machine learning based anomaly detectors. 
Third, we analyze several practical aspects of the framework, including decentralized detection coverage, extension to AC systems, and robustness to measurement noise. Finally, additional experimental results conducted under partial observability are provided in \Cref{sec: partial_exp}.

\textbf{Dataset Configuration.}\ 
We evaluate the proposed framework on several standard IEEE transmission grid test systems, including the 14-bus, 30-bus, 57-bus, and 118-bus systems. These benchmark networks cover a wide range of topological characteristics, from small and sparse networks (e.g., IEEE 14-bus) to larger and more interconnected systems (e.g., IEEE 118-bus). Such test cases are widely used in power system research to validate state estimation and cybersecurity methodologies.

For each system, time-series measurement data are generated using DC power flow simulations implemented in MATPOWER~\cite{zimmerman2010matpower}. To introduce realistic temporal variability, load and generation profiles are scaled using time-series data derived from the Duquesne Light Company (Pittsburgh). Random scaling factors are applied to simulate natural fluctuations in system demand. In addition, Gaussian white noise with a standard deviation of $0.02$ p.u. is added to all measurements to emulate realistic sensor noise levels commonly encountered in operational power systems.

% \textbf{Choice of Attack Methods.}\ To comprehensively evaluate the proposed detection framework, we consider not only the autoencoder-based attack developed in Section~\ref{sec:attack-AC}, but also several representative blind FDIA generation methods from the literature. Specifically, the attack baselines include
% \begin{itemize}
% \item a principal component analysis (PCA)-based Jacobian estimator~\cite{yu2015blind},
% \item a low-rank singular value decomposition (low-rank SVD) based approach~\cite{yang2023false},
% \item iAttackgen~\cite{shahriar2021iattackgen}, a generative adversarial model for crafting stealthy attacks.
% \end{itemize}

\textbf{Implementation Details.}\ 
For constructing the blind FDIA attack, we employ the autoencoder architecture described in Section~\ref{sec:AE-method}. The AE learns the measurement manifold directly from historical measurement data. The input to the AE consists of active power flow measurements across all transmission branches of the grid. For example, in the IEEE 14-bus system containing 20 branches, the input vector dimension is 20. The AE output mirrors the input and the network is trained to minimize the mean squared reconstruction error.

To ensure that the AE captures the intrinsic system behavior while filtering out measurement noise, the latent dimension is set equal to the number of independent system states. Under the DC model, this corresponds to the number of buses minus one (excluding the slack bus). For instance, the latent dimension of the IEEE 14-bus system is set to 13. This design allows the AE to learn the low-dimensional physical subspace defined by the power flow equations while suppressing noise components. Both the encoder and decoder consist of five fully connected layers with approximately ten neurons each using \ac{relu} activations.

The AE is trained for up to 300 epochs using the Adam optimizer with a learning rate of $2\times10^{-4}$ and a batch size of 50. When adversarial modules such as generators and discriminators are involved, they are updated every five iterations to maintain stable convergence. All simulations are performed using MATLAB 2022b for power flow generation and Python 3.12 for AE training and attack construction. The experiments are conducted on a workstation equipped with an Intel Core i7 2.2 GHz processor and 16 GB of RAM.

\subsection{Evaluation of Autoencoded Blind FDIA}
\label{sec:exp-AE}

We first evaluate whether the proposed autoencoder can successfully learn the underlying measurement manifold of the power system. Since the measurement space is high dimensional, direct visualization is not feasible. Instead, we select four representative branch power flow measurements in the IEEE 14-bus system, corresponding to lines $(2,3)$, $(2,5)$, $(4,7)$, and $(6,13)$, denoted as pf1 through pf4.

Figure~\ref{fig:manifold-learn-correct} illustrates the original measurements (top row) and the AE reconstructed measurements (bottom row) for several combinations of these power flow variables. The close geometric alignment between the original and reconstructed measurements demonstrates that the AE successfully captures the low-dimensional measurement manifold of the system.

This reconstruction capability is essential for the blind FDIA described in Section~\ref{sec:AE-method}. By reconstructing measurements that lie close to the learned manifold, the attacker can generate perturbations that remain near the approximate null space of the residual-based bad data detector (BDD), allowing the attack to remain stealthy.

\begin{figure}[h]
\centering
\includegraphics[width=0.8\linewidth]{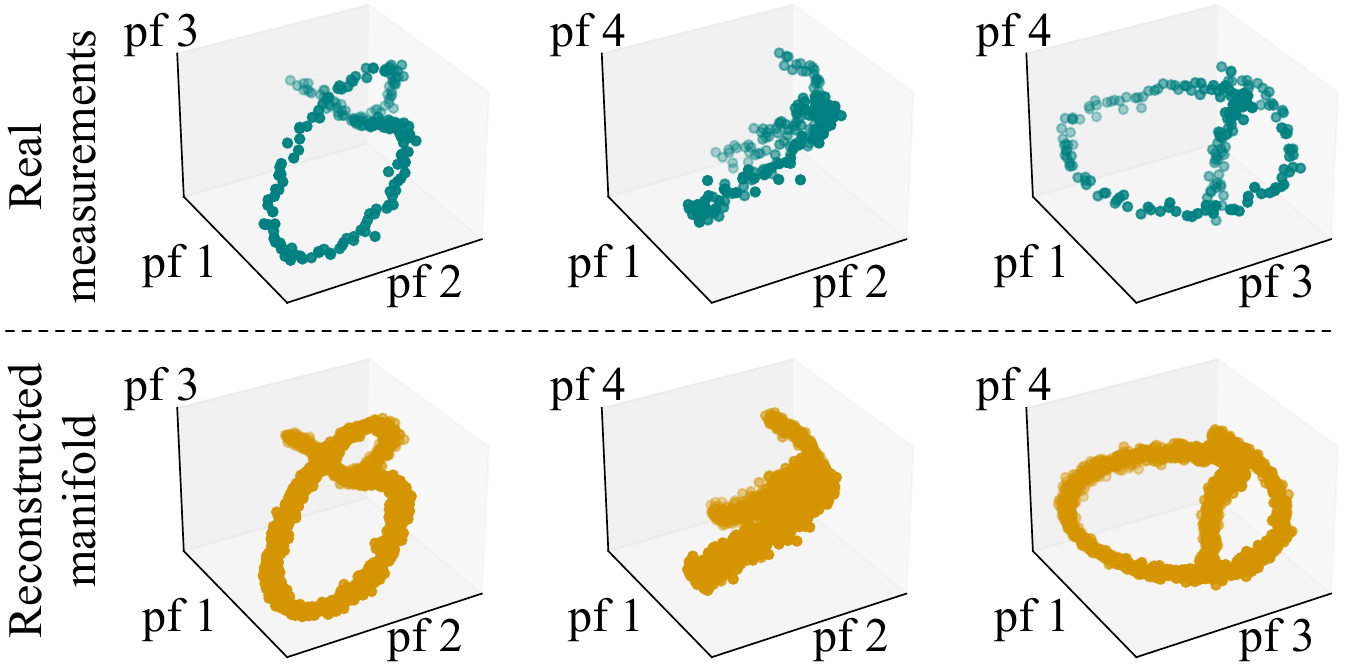}
\caption{Comparison of original and reconstructed measurement manifolds in the IEEE 14-bus system.}
\label{fig:manifold-learn-correct}
\end{figure}

\subsection{Comparison with \ac{ml}-based Detectors}
\label{sec:exp-compare-detectors}

We next evaluate the ability of the proposed Cycle-Space-informed detection framework to detect the blind FDIA attack. Figure~\ref{fig:compare-detectors} compares the proposed detector with several advanced machine learning based anomaly detection methods, including an \ac{lstm}-based detector~\cite{deepSignalAnomalyDetector} and an \acs{if} algorithm.
For each system, anomaly scores are computed for measurement sequences containing the proposed AE-based blind FDIA. The results show that machine learning based detectors frequently produce false positives due to natural system fluctuations and load variability. Consequently, these methods often fail to clearly identify the attack interval.

In contrast, the proposed Cycle-Space detector consistently identifies the attack interval across all tested systems. This behavior highlights the advantage of incorporating physical system structure into the detection process. By leveraging the Cycle-Space structure of the power network, the proposed method can isolate the null space components introduced by the attack, enabling accurate detection even when the attack is designed to evade data-driven anomaly detectors.

\begin{figure}[h]
    \centering
    \includegraphics[width= 0.9\linewidth]{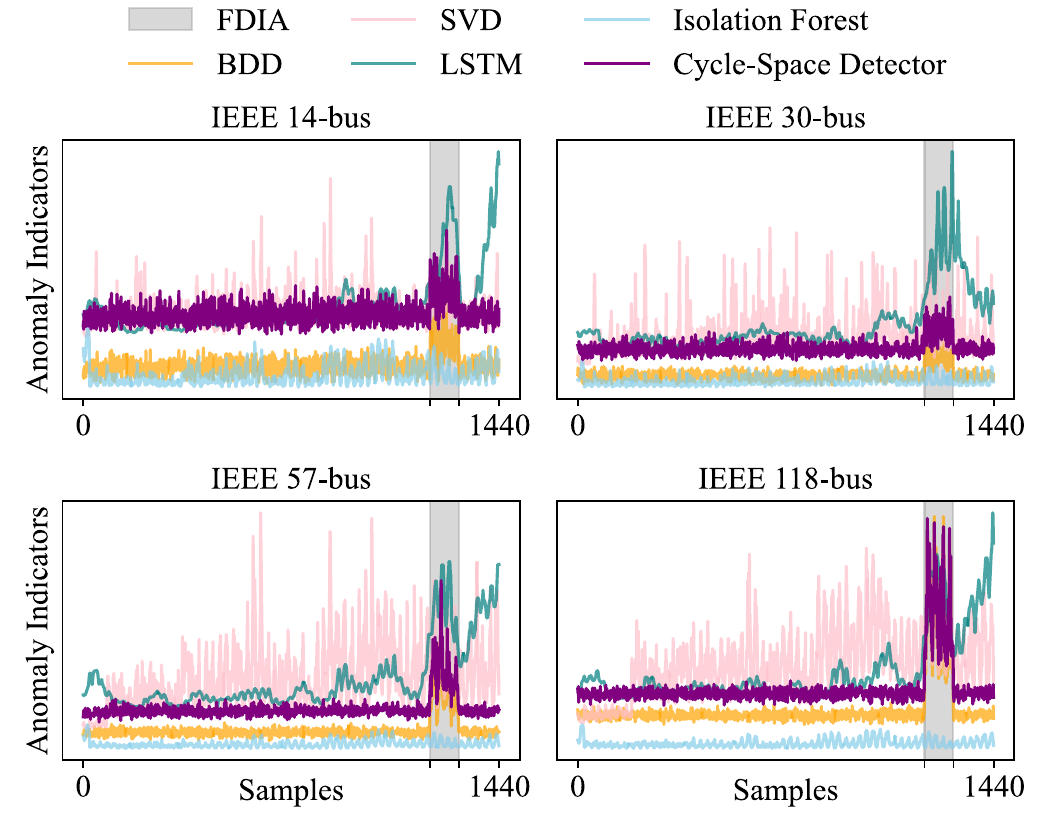}
    \caption{The comparison of our proposed Cycle-Space based detector and \ac{ml}-based detectors on the proposed FDIA in IEEE 14, 30, 57, 118-bus systems.}
    \label{fig:compare-detectors}
\end{figure}

\subsection{Aggregated vs. Single Decentralized Detection Coverage Comparison}
\label{sec:exp-agg_vs_single}

To evaluate the effectiveness of the decentralized detection architecture described in Section~\ref{sec:cycle_informed_method}, we analyze the detection capability of individual cycle-based detectors (D-detectors) and compare their performance with the aggregated detector.
Figure~\ref{fig:f1} presents the $F_1$ scores (visualized as colored heatmaps) in the attack magnitude–noise plane for the IEEE 14-, 30-, 57-, and 118-bus systems. The dashed blue curves correspond to the $F_1=0.8$ boundary achieved by individual D-detectors, while the red dashed curves correspond to the aggregated detector.

The results show that while individual cycle-based detectors can detect attacks within limited operating regions, the aggregated detector provides significantly broader detection coverage. This demonstrates that combining information from multiple cycle-based detectors enhances detection reliability and improves robustness against varying attack magnitudes and noise levels.To assess robustness when full state information is unavailable, we conduct an additional study under partial observability in \Cref{sec: partial_exp}.

\begin{figure}[h]
    \centering
    \includegraphics[width=  0.9\linewidth]{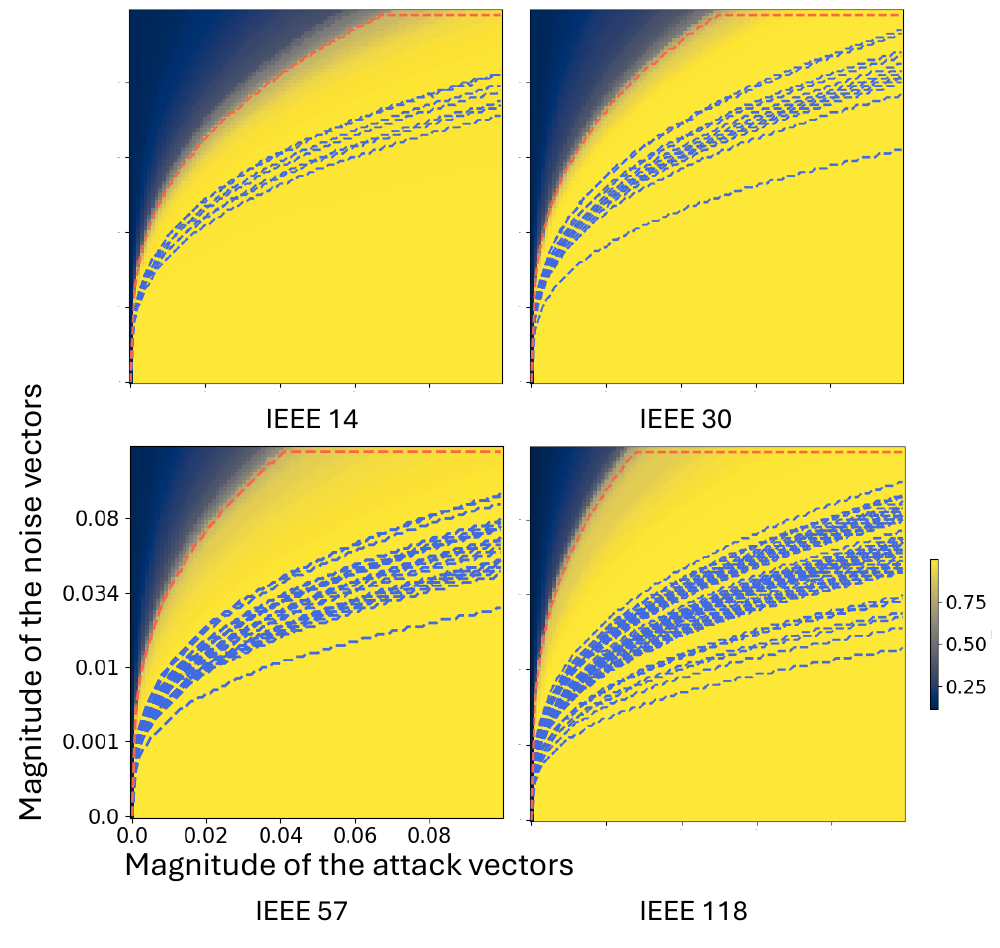}
    \caption{$F_1$ scores (colored heatmap) in the attack magnitude-noise plane, for IEEE 14, 30, 57, 118-bus systems, where the 0.8 value curves for single D-detectors (dashed blue lines) and aggregated result (red dashed line) are shown for comparison.}
    \label{fig:f1}
\end{figure}

% \subsection{Sensitivity Analysis for AE Latent Size}
% \label{sec:AE-latentdimension}

% The latent dimensionality of the autoencoder plays a critical role in accurately capturing the measurement manifold. To analyze this design choice, we vary the AE latent dimension in the IEEE 14-bus system from 1 to 20 and evaluate its impact on reconstruction and residual errors.
% %
% The IEEE 14-bus system contains 13 independent system states due to the fixed slack bus. Therefore, the latent dimension should ideally match this value. Figure~\ref{fig:errors_14bus} (top) shows the mean residual error (green line) and reconstruction error as functions of the latent dimension.

% As the latent dimension increases, the residual error decreases approximately linearly. When the latent dimension approaches 20 (the number of available measurements), the residual error converges to the level observed in real measurements (yellow line). These results confirm that selecting the latent dimension to match the intrinsic system state dimension allows the AE to accurately capture the physical subspace of the measurements while suppressing noise components.

% \begin{figure}[H]
% \centering
% \includegraphics[width=0.95\linewidth]{images/hidden.pdf}
% \caption{Residual error and reconstruction error against hidden layer size for IEEE 14-bus test case.}
% \label{fig:errors_14bus}
% \end{figure}

\subsection{Sensitivity Analysis under Noise}

We now evaluate the robustness of the Cycle-Space detection framework under different measurement noise levels. Figure~\ref{fig:noise_sensitive} illustrates the reconstruction loss obtained during detection for varying standard deviations of measurement noise in the IEEE 14-bus system. The shaded gray region corresponds to the interval during which the AE-based blind FDIA is active.

\begin{figure}[h]
    \centering
    \vskip -0.1in
    \includegraphics[width= 0.8\linewidth]{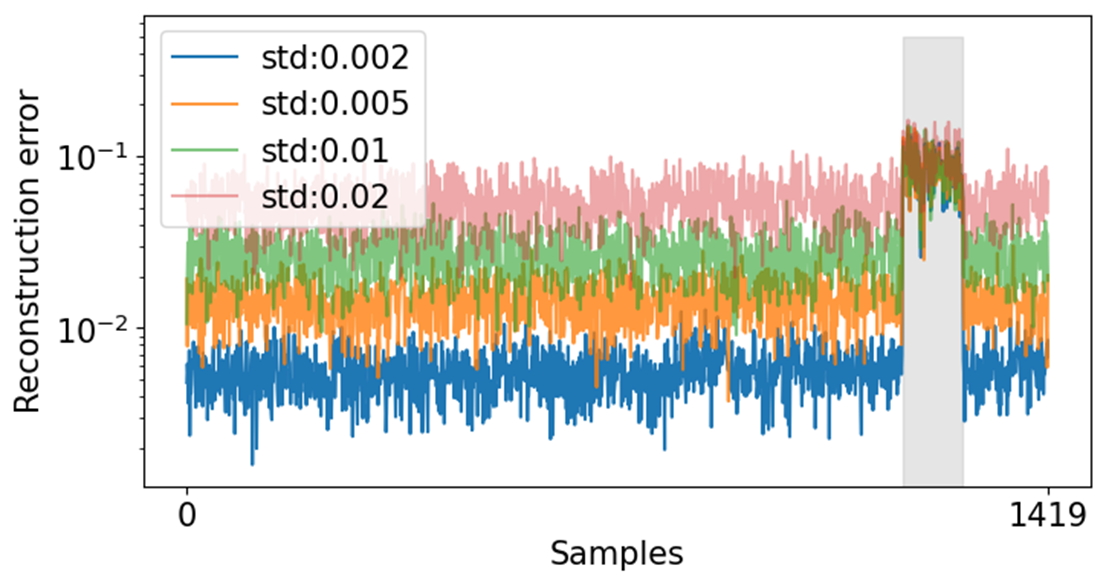}
    \vskip -0.1in
    \caption{Reconstructed loss in detection procedure in various noise levels in IEEE 14-bus test case.}
    \label{fig:noise_sensitive}
    \vskip -0.1in
\end{figure}

The results show that the proposed detector maintains reliable detection capability under realistic PMU noise levels (approximately $0.01$ p.u.). This indicates that the Cycle-Space detection approach remains effective even when measurements contain realistic levels of sensor noise, supporting its applicability in practical power system monitoring environments.

\subsection{Extension to Nonlinear AC Systems}
% To assess the generalizability of our framework beyond linear DC models, we evaluate the performance of \ac{csd} in AC systems under data-driven \ac{fdia}. In this setting, $\z$ includes both active and reactive power flow measurements based on the AC power flow equations. We compare the detection results of \ac{csd} and \ac{ml}-based methods in Fig.~\ref{fig:detection_comparison_AC}. Similar to the DC setting, \ac{ml}-based anomaly detectors fail to reliably detect the proposed AC attack. Specifically, the \acs{if} produces no significant response, while the \ac{lstm}-based detector exhibits erratic behavior, with inconsistent localization and high false positive rates. These results confirm the vulnerability of \ac{ml}-based anomaly detectors in such FDIAs. Since such models operate purely on statistical patterns without leveraging the system’s physical structure, their performance is  influenced by the consistency of the observed measurement trajectory. By locally adjusting $\boldsymbol{J}$ (see Section \ref{sec:attack-AC}) to maintain the attack along the manifold, the adversary ensures that the perturbation remains statistically consistent with normal behavior, further underscoring the advantage of geometry-aware detectors like \ac{csd}.

To test whether our framework works beyond the linear DC model, we run \ac{csd} on AC systems facing a data-driven \ac{fdia}. Here, $\z$ contains the active and reactive power flow measurements that follow the AC power flow equations. Figure~\ref{fig:detection_comparison_AC} shows the detection results of \ac{csd} next to the \ac{ml}-based methods. As in the DC case, the \ac{ml}-based detectors cannot reliably identify the AC attack. The \acs{if} gives almost no response, and the \ac{lstm}-based detector behaves unevenly, flagging the wrong time window and raising many false alarms. This again shows that \ac{ml}-based detectors are weak against these FDIAs. Because these models depend only on statistical patterns and ignore the physical layout of the grid, their accuracy hinges on whether the measured signal stays steady over time. By updating $\boldsymbol{J}$ locally (see Section~\ref{sec:attack-AC}) so the attack stays on the manifold, the attacker keeps the injected change looking like normal data, which once more points to the value of structure-aware detectors such as \ac{csd}.

\begin{figure}[h]
    \centering
    \vskip -0.1in
    \includegraphics[width=0.8\linewidth]{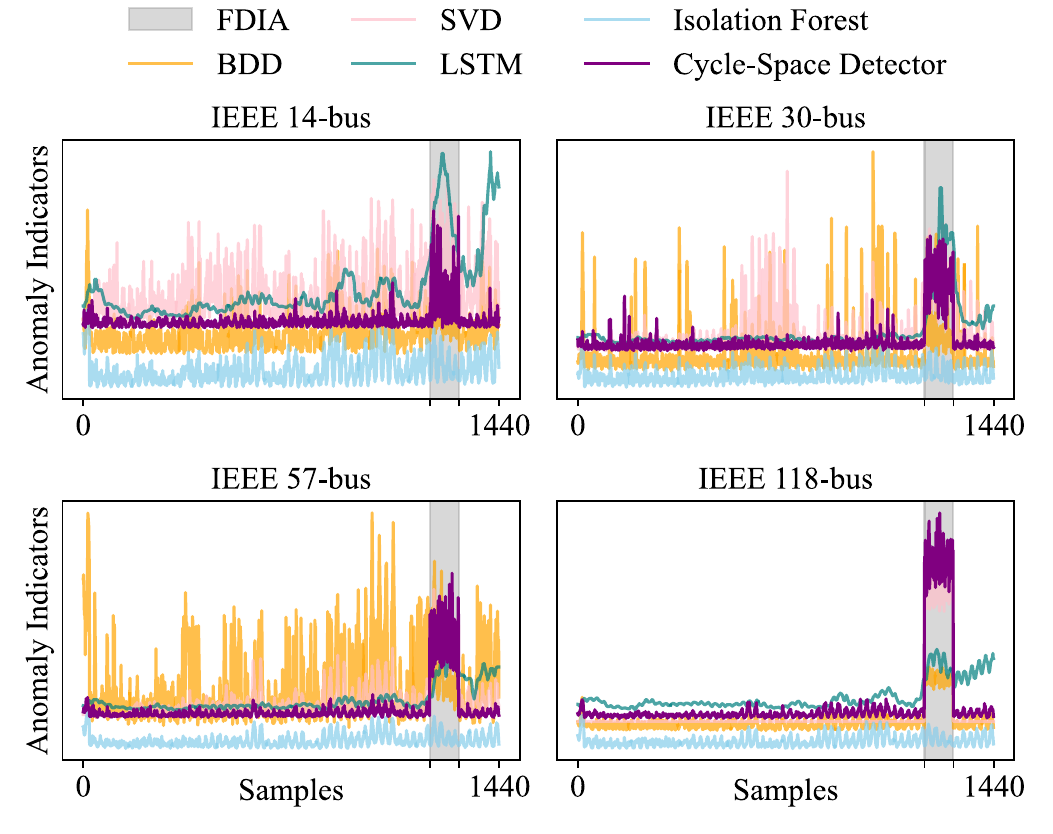}
    \vskip -0.1in
    \caption{The comparison of \ac{csd} and \ac{ml}-based detectors on the proposed FDIA in IEEE $14$, $30$, $57$, $118$-bus AC systems.}
    \vskip -0.1in
    \label{fig:detection_comparison_AC}
\end{figure}

% In contrast, our proposed \ac{csd} remains robust and consistently detects the attack period across all AC systems. While performance is slightly degraded compared to the DC case, the attack window remains clearly identifiable. This degradation arises from a key structural difference between DC and AC systems: in DC models, the measurements lie on a low-dimensional flat subspace, resulting in a sharp drop in singular values after the true rank.
By contrast, our \ac{csd} stays reliable and picks out the attack period in every AC system we tested. Its performance drops a little relative to the DC case, yet the attack window is still easy to spot. This drop comes from a basic structural gap between DC and AC systems: under the DC model the measurements sit on a low -dimensional flat subspace, so the singular values fall off sharply once the true rank is passed.
In AC systems, however, the measurements lie on a nonlinear low-dimensional manifold, causing the singular values to decay more gradually rather than collapsing sharply \footnote{ As for a comparison between AC and DC \ac{state-estimation} results, since we use a linearized AC description, it is expected to exhibit similar behavior as in the DC case and is therefore omitted for brevity.}. As a result, the distinction between normal and anomalous subspaces becomes less pronounced. To account for this effect, the singular value threshold in \ac{csd} must be tuned appropriately. 

% We find that setting the SVD threshold $\mathcal{T}$ to values between $10^{-1}$ and $10^{-2}$ yields robust and consistent performance across all AC systems. This adjustment allows the method to effectively separate the underlying structure of the measurement data from the perturbed subspace induced by the attack. These results confirm that the geometric foundation of \ac{csd} extends effectively to nonlinear AC systems, maintaining high detection accuracy and low false positive rates despite manifold curvature and the departure from exact linear structure, and outperforming both classical and learning-based detectors.

\begin{table}[ht]
\centering
\caption{Detection performance F1-score(\%) across IEEE test systems and attack models. Detectors include \acs{svd}, \ac{bdd}, \ac{if}, \ac{lstm}, and \ac{csd}. $M^2$ refers to the proposed Autoencoder-based (i.e., Measurement Manifold learning-based) \ac{fdia}.}
\label{tab:detection_results}
\scriptsize
\begin{tabular}{@{}llccccc@{}}
\toprule
\textbf{System} & \textbf{Attack} & \textbf{SVD} & \textbf{BDD} & \textbf{Iso. Forest} & \textbf{LSTM} & \textbf{\ac{csd}} \\ \midrule
\multirow{4}{*}{IEEE 14}  & PCA        & 31.7 & 38.3 & 15.5 & 34.0 & \textbf{50.4} \\
                         & Low-rank SVD        & 31.8 & 34.6 & 16.6 & \textbf{56.2} & 48.6 \\
                         & iAttackgen & 31.6 & 34.9 & 15.5 & \textbf{56.2} & 49.6 \\
                         & $M^2$        & 30.2 & 27.8 & 15.2 & 43.4 & \textbf{44.2} \\ \midrule
\multirow{4}{*}{IEEE 30}  & PCA        & 60.2 & 42.8 & 15.1 & 60.2 & \textbf{95.7} \\
                         & Low-rank SVD        & 58.5 & 45.2 & 13.9 & 57.5 & \textbf{94.7} \\
                         & iAttackgen & 60.2 & 50.0 & 15.1 & 57.8 & \textbf{94.1} \\
                         & $M^2$       & 57.0 & 19.7 & 13.5 & 57.8 & \textbf{91.8} \\ \midrule
\multirow{4}{*}{IEEE 57}  & PCA        & 74.3 & 17.9 & 17.5 & 55.6 & \textbf{98.5} \\
                         & Low-rank SVD        & 74.3 & 17.5 & 16.2 & 51.7 & \textbf{98.5} \\
                         & iAttackgen & 76.0 & 18.6 & 17.8 & 52.6 & \textbf{99.0} \\
                         & $M^2$       & 71.7 & 16.1 & 17.4 & 43.5 & \textbf{98.5} \\ \midrule
\multirow{4}{*}{IEEE 118} & PCA        & 99.0 & 73.8 & 17.0 & 54.1 & \textbf{100} \\
                         & Low-rank SVD        & 99.0 & 75.2 & 16.7 & 56.2 & \textbf{100} \\
                         & iAttackgen & 99.0 & 75.2 & 16.4 & 53.6 & \textbf{100} \\
                         & $M^2$       & 94.3 & 66.2 & 16.5 & 51.9 & \textbf{99.5} \\ \bottomrule
\end{tabular}
\end{table}

To further evaluate the proposed \ac{csd} detection framework, we consider not only the autoencoder-based attack developed in Section~\ref{sec:attack-AC}, but also several representative blind FDIA generation methods from the literature. Specifically, we consider the following attacks for comparison:
\begin{itemize}
\item a principal component analysis (PCA)-based Jacobian estimator~\cite{yu2015blind},
\item a low-rank singular value decomposition (low-rank SVD) based approach~\cite{yang2023false},
\item iAttackgen~\cite{shahriar2021iattackgen}, a generative adversarial model for crafting stealthy attacks.
\end{itemize}
As illustrated in \Cref{tab:detection_results}, first, the proposed autoencoder attack, $M^2$, achieves lower BDD detection rates (or equivalently, higher BDD bypass rates) than several existing blind FDIA approaches. Second, the \ac{csd} exhibits superior detection capabilities compared to \ac{svd}, \ac{bdd}, \ac{if}, and \ac{lstm}. 
Specifically, \ac{csd} maintains robust performance across all four IEEE systems regardless of the attack model employed.
Finally, the improved \ac{csd} performance observed in larger systems suggests that higher topological complexity provides a richer set of cycle based constraints, which significantly enhances the fidelity of the null space estimation.

\section{Conclusion and future work} \label{sec:conclusion}

This paper investigates blind false data injection attacks (FDIAs) and corresponding detection mechanisms in power system state estimation. We first propose a data-driven FDIA construction method based on an autoencoder (AE) that learns the measurement manifold directly from historical data. By exploiting the orthogonal direction of this manifold, the attack can generate stealthy perturbations that bypass traditional bad data detection (BDD) without requiring prior knowledge of system parameters. Numerical results demonstrate that the proposed AE-based attack achieves higher BDD bypass rates than several existing blind FDIA approaches.
To counter such attacks, we develop a topology-aware \ac{csd} based on the Cycle-Space of the network, { which is shown to achieve optimal generalization error among Cycle-Space estimators associated with the attack detection.
A primary contribution of this research is the first use of Cycle-Space theory in the context of cyber-physical security.
This structural approach provides a fundamental advantage over purely data-driven detectors by grounding security in the invariant physical topology of the grid.}
By leveraging graph structural information, the proposed method reconstructs the system null space without relying on precise line parameters and enables decentralized detection across network cycles.
Experimental results on multiple IEEE benchmark systems show that the proposed detector consistently identifies the proposed blind FDIA, as well as competitive/existing blind FDIAs, and outperforms several machine-learning-based anomaly detectors while maintaining robustness under measurement noise. 
As a supporting analytical result, we derive a first-order finite-sample expression for the expected generalization error of the null space estimator, which we use to establish the optimality of the Minimum Cycle Basis among cycle-basis-based estimators. 
The expression depends only on $\text{rank}(\H)$, sample size, and noise variance, and applies to subspace estimation beyond power systems.

Future work will focus on expanding the applicability of the Cycle-Space formalism, particularly in topology estimation and in the optimization of moving target defense strategies.

\bibliographystyle{IEEEtran}
\bibliography{mybib}

% \newpage

\appendix

\renewcommand{\theequation}{A.\arabic{equation}}
\setcounter{equation}{0}

\subsection{Expected Generalization Error Simplification}

\label{sec:gen_error_simle}
$\Z_\star^T\hat{\n}$ can be expanded as:
\begin{equation}
\Z_\star^T\hat{\n}=\Z_\star^T\n+\Z_\star^T\err=\bar{\Z}_\star^T\n+\E_\star^T\n+\Z_\star^T\err. 
 \end{equation}
Since by definition $\bar{\Z}_\star^T\n=0$:
\begin{equation}
 \Z_\star^T\hat{\n}=\E_\star^T\n+\Z_\star^T\err 
 \end{equation}
And therefore: 
\begin{align}
\|\Z_\star^T\hat{\n}\|_2^2&=\|\E_\star^T\n+\Z_\star^T\err\|_2^2 \nonumber\\
&= (\E_\star^T\n+\Z_\star^T\err)^T(\E_\star^T\n+\Z_\star^T\err) \nonumber\\ 
&=\|\E_\star^T\n\|^2+2(\E_\star^T\n)^T\Z_\star^T\err+\|\Z_\star^T\err\|^2
\end{align}  

Now we use this expanded term in~\cref{eq:gen_error} (the definition of $ E_{\text{gen}}$):
\begin{align}
   &E_{\text{gen}}=\notag\\&\lim_{\time_\star\rightarrow\infty}\frac{1}{\time_\star}\EXP_\Z[\|\E_\star^T\n\|^2+2(\E_\star^T\n)^T\Z_\star^T\err+\|\Z_\star^T\err\|^2]
\end{align}

Since $\frac{1}{\time_\star}\EXP_\Z[\|\E_\star^T\n\|^2]=\sigma^2$:
\begin{equation}
\label{final_empirical}
 E_{\text{gen}}=\sigma^2+\lim_{\time_\star\rightarrow\infty}\frac{1}{\time_\star}\EXP_\Z[2(\E_\star^T\n)^T\Z_\star^T\err+\|\Z_\star^T\err\|^2]
 \end{equation}

Next we focus on $\EXP_\Z[2(\E_\star^T\n)^T\Z_\star^T\err]$:

\begin{align}
\EXP_\Z[(\E_\star^T\n)^T\Z_\star^T\err]&=\EXP_\Z[\n^T\E_\star\Z_\star^T\err] \\
&= \EXP_\Z[\n^T\E_\star(\bar{\Z}_\star^T+\E_\star^T)\err]
\end{align}

Since $\E_\star,\n,\bar{\Z}_\star,\err$ are independent we use the expectation separability property:
\begin{equation}
    \EXP_\Z[\n^T\E_\star \bar{\Z}_\star^T\err]= \EXP_\Z[\n^T]\EXP_\Z[\E_\star]\EXP_\Z[\bar{\Z}_\star^T\err]
\end{equation}
Observing that $\EXP_\Z[\E_\star]=0_{m\times\time_\star}$, we therefore have:

\begin{equation}
\label{eq:simple_generalization}
E_{\text{gen}}=\sigma^2+\lim_{\time_\star\rightarrow\infty}\frac{\EXP_\Z[\|\Z_\star^T\err\|^2+2\n^T\E_\star \E_\star^T\err]}{\time_\star}
 \end{equation}

\subsection{First-order Lagrangian Solution}
\label{sec:lagrangian_proof}

\begin{prop}[First Order Lagrangian Solution]
\label{eq:prop_delta_solution_a}
The first order solution for the vector $\err$ which minimizes $\|\Z^T\hat{\n} \|^2$ is given by:
\begin{equation}
\begin{aligned}
    \err =& -(\bar\Z^T)^{+} \E^T \n
\end{aligned}
\label{eq:delta_solution_a}
\end{equation}
\end{prop}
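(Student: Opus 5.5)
The plan is to set up the constrained problem $\min_{\hat{\n}}\|\Z^T\hat{\n}\|^2$ subject to $\|\hat{\n}\|^2=1$. We write $\Z=\bar\Z+\E$ with $\bar\Z=\H\X$ and $\bar\Z^T\n=0$, substitute $\hat{\n}=\n+\err$, and linearize in the small quantities $\E$ and $\err$.

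First we expand $\Z^T\hat{\n}=\bar\Z^T\n+\bar\Z^T\err+\E^T\n+\E^T\err$. The first term vanishes by the definition of the null space. Dropping the second-order term $\E^T\err$, as announced before the proposition, leaves
\[
\Z^T\hat{\n}\approx \bar\Z^T\err+\E^T\n .
\]
We then linearize the norm constraint. The condition $\|\n+\err\|^2=1$ with $\|\n\|=1$ becomes $2\n^T\err+\err^T\err=0$. Neglecting $\err^T\err$ gives $\n^T\err=0$, so $\err\perp\n$.

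Next we form the Lagrangian
\[
L(\err,\lambda)=\|\bar\Z^T\err+\E^T\n\|^2+2\lambda\,\n^T\err .
\]
Stationarity gives $\bar\Z\bar\Z^T\err+\bar\Z\E^T\n+\lambda\n=0$. We multiply on the left by $\n^T$. Because $\n^T\bar\Z=0$ and $\|\n\|=1$, this yields $\lambda=0$. The remaining normal equations are $\bar\Z\bar\Z^T\err=-\bar\Z\E^T\n$, which is the least-squares problem $\min_{\err}\|\bar\Z^T\err+\E^T\n\|^2$.

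The main obstacle is selecting a solution, because $\bar\Z\bar\Z^T=\H\X\X^T\H^T$ is singular: its kernel is $\mathrm{Null}(\H^T)$, which contains $\n$. The general solution is $\err=-(\bar\Z^T)^{+}\E^T\n+\mathbf{w}$ with $\mathbf{w}\in\mathrm{Null}(\bar\Z^T)$. We fix the free part using the constraint $\n^T\err=0$, together with the convention that the estimated vector is identified up to motion within the null space, so $\err$ is taken to be the minimum-norm solution. We then note that $(\bar\Z^T)^{+}=(\bar\Z\bar\Z^T)^{+}\bar\Z$ has range $\mathrm{Col}(\bar\Z)=\mathrm{Col}(\H)$, using that $\X$ has full row rank almost surely since $\time_o\ge \dimz>\dimx$. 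Hence this $\err$ is automatically orthogonal to $\n$, and the linearized constraint holds. The result is
\[
\err=-(\bar\Z^T)^{+}\E^T\n .
\]

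When the nullity is one, as in the per-cycle case, $\mathbf{w}$ is a multiple of $\n$ and is eliminated by $\n^T\err=0$, so the solution is exactly unique. In the general nullity case we will state explicitly that the first-order expansion is taken modulo rotations within $\mathrm{Null}(\H^T)$.
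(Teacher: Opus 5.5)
Your proposal is correct and follows the paper's proof essentially step for step. You drop $\E^T\err$ and $\err^T\err$, impose the linearized constraint $\n^T\err=0$ through a Lagrange multiplier, show the multiplier vanishes by left-multiplying with $\n^T$, and take the pseudoinverse solution of $\bar\Z\bar\Z^T\err=-\bar\Z\E^T\n$ using $(\bar\Z\bar\Z^T)^{+}\bar\Z=(\bar\Z^T)^{+}$. Where the paper simply discards the homogeneous term, you are more careful: you check that the minimum-norm solution lies in $\mathrm{Col}(\H)$ and therefore satisfies $\n^T\err=0$, and you note that in the nullity-one (per-cycle) case this constraint eliminates the free component exactly.
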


\begin{proof}
When the noise std $\sigma$ is small, we may safely disregard the second-order term $\E^T \err$.

Also:
\begin{align}
    \|\n+\err\|^2&=(\n+\err)^T(\n+\err)\\
    &= \n^T\n+2\n^T\err+\err^T\err\\
    &= 1+2\n^T\err+\err^T\err\\
    &= 1
\end{align}
Disregarding the second order term $\err^T\err$, we have $\n^T\err=0$.
So the optimization problem becomes:
\begin{equation}
    \label{eq:small_noise_null_estimation_error}
    \min_{\err} \frac{1}{2} \|\bar\Z^T\err + \E^T \n\|^2 \quad \text{s.t.} \quad \n^T\err = 0
\end{equation}

Consider the following Lagrangian:
\begin{equation}
    \mathcal{L}(\err, \mu) = \frac{1}{2} (\bar \Z^T \err + \E^T \n)^T (\bar\Z^T \err + \E^T \n) + \mu (\n^T \err)
\end{equation}

Taking the derivative with respect to $\err$ (First-Order Condition):

\begin{equation}
    \frac{\partial \mathcal{L}}{\partial \err} = \bar\Z(\bar\Z^T \err + \E^T \n) + \mu \n = 0
\end{equation}
\begin{equation}
    \bar\Z\bar\Z^T \err + \bar\Z\E^T \n + \mu \n = 0
    \label{eq:l_constrain}
\end{equation}

Solving for the Lagrange Multiplier $\mu$, we left-multiply by $\n^T$:
\begin{equation}\n^T\bar\Z \bar\Z^T \err + \n^T \bar\Z \E^T \n + \mu (\n^T \n) = 0
\end{equation}
 
Applying the following properties: 1) Orthogonality: Since $\bar\Z^T \n = 0$, taking the transpose gives $\n^T \bar\Z = 0$. 2) Unit Vector: $\n^T \n = 1$. Substituting these expressions into the above equation:
\begin{equation}(0) \bar\Z^T \n + (0) \E^T \n + \mu (1) = 0\end{equation}
\begin{equation}\implies \mu = 0\end{equation}

Substituting $\mu = 0$ back into \cref{eq:l_constrain}:
\begin{equation}\bar\Z\bar\Z^T \err + \bar\Z \E^T \n = 0\end{equation}
\begin{equation}\bar\Z \bar\Z^T \err = - \bar\Z \E^T \n\end{equation}
Finally, we now isolate $\err$. The matrix $\bar\Z\bar\Z^T$ may not be invertible (singular), so we use the Moore-Penrose pseudoinverse, denoted as $(\bar\Z\bar\Z^T)^+$.
\begin{align}
    \err = -(\bar\Z\bar\Z^T)^{+}\bar\Z \E^T \n+(I_\dimz-(\bar\Z\bar\Z^T)^+\bar\Z\bar\Z^T)y, \\ y\quad \text{arbitrary}\in R^m\end{align}

Since $(I_m-(\bar\Z\bar\Z^T)^+\bar\Z\bar\Z^T)y$ lies in the null space of $\bar\Z^T$, we disregard this term and $(\bar\Z \bar\Z^T)^+ \bar\Z = (\bar\Z^T)^+$.

Thus, the simplified first-order solution for $\err$ is 
\begin{equation}
\err = -(\bar\Z^T)^{+} \E^T \n
\end{equation}

\end{proof}

\subsection{Expected Generalization Error}
\label{sec:gen_error_proof}

\begin{theorem}[First Order Finite Sample Expected Generalization Error]
The first-order finite sample expected generalization error is given by:
\begin{equation}
\boxed{
 E_{\text{gen}}=\sigma^2(1+
 \frac{\mathrm{rank}(\H)}{\time_o-n})
}
\label{eq:pop_error_a}
\end{equation}
\label{eq:theo_pop_error_a}
\end{theorem}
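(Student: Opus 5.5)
Starting from the simplified expression \cref{eq:simple_generalization}, we substitute the first-order solution of Proposition~\ref{eq:prop_delta_solution}, $\err=-(\bar\Z^T)^{+}\E^T\n$, where $\bar\Z=\H\X$ is the clean training data and $\E$ is the training noise. The plan rests on one structural fact: $\E_\star,\X_\star$ (test) are independent of $\E,\X$ (training). We may therefore first average over the test set with $\err$ held fixed, and only then take $\EXP_\Z$ over the training quantities.

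\textbf{Cross term.} For fixed $\err$,
\[
\frac{1}{\time_\star}\EXP[\n^T\E_\star\E_\star^T\err]=\sigma^2\n^T\err .
\]
Since $\err$ lies in the row space of $(\bar\Z^T)^+$, it lies in $\mathrm{Col}(\bar\Z)\subseteq\mathrm{Col}(\H)$, which is orthogonal to $\n$. (Equivalently, $\EXP_\Z[\err]=0$ because $\E$ has zero mean.) So the cross term vanishes.

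\textbf{Quadratic term.} For fixed $\err$,
\[
\frac{1}{\time_\star}\EXP\|\Z_\star^T\err\|^2=\err^T(\H\H^T+\sigma^2 I)\err .
\]
This uses $\X_\star$ isotropic and independent of $\E_\star$. The $\sigma^2\|\err\|^2$ piece is $O(\sigma^4)$ and is dropped at first order, which is consistent with the proposition's linearization. It remains to compute $\EXP_\Z[\err^T\H\H^T\err]$. Conditioning on $\X$, the vector $\E^T\n$ is $\mathcal{N}(0,\sigma^2 I_{\time_o})$ because $\|\n\|=1$. Hence
\[
\EXP_\E[\err^T\H\H^T\err]=\sigma^2\,\mathrm{tr}\!\left(\bar\Z^{+}\H\H^T(\bar\Z^{+})^T\right)=\sigma^2\,\mathrm{tr}\!\left(\H\H^T(\bar\Z\bar\Z^T)^{+}\right),
\]
using $(\bar\Z^T)^{+T}=\bar\Z^{+}$ and $(\bar\Z^{+})^T\bar\Z^{+}=(\bar\Z\bar\Z^T)^{+}$.

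\textbf{Main obstacle: averaging over the design $\X$.} With $\bar\Z\bar\Z^T=\H\X\X^T\H^T$, we need
\[
\EXP_\X\,\mathrm{tr}\!\left(\H\H^T(\H\X\X^T\H^T)^{+}\right).
\]
The plan is to take a thin SVD $\H=U\Sigma V^T$ of rank $r$ and write $\H\X=U\Sigma W$, where $W=V^T\X$ is an $r\times\time_o$ matrix with i.i.d.\ standard Gaussian entries; this step uses rotation invariance of the isotropic Gaussian. When $U$ has orthonormal columns and $\Sigma$ is invertible,
\[
(U\Sigma WW^T\Sigma U^T)^{+}=U\Sigma^{-1}(WW^T)^{-1}\Sigma^{-1}U^T ,
\]
and $\H\H^T=U\Sigma^2U^T$. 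The trace therefore collapses to $\mathrm{tr}\,(WW^T)^{-1}$. This is the step where the orthogonal projection property $\H\H^T(\H\H^T)^{+}=P_{\mathrm{Col}(\H)}$ enters, since it cancels $\Sigma$ entirely. The mean of an inverse Wishart matrix then gives
\[
\EXP\,(WW^T)^{-1}=\frac{I_r}{\time_o-r-1},
\]
so the trace equals $r/(\time_o-r-1)$.

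Matching the stated denominator $\time_o-n$ needs care. For the DC Jacobian the reference bus is removed, so $r=\mathrm{rank}(\H)=n-1$ when $\H$ has $n$ columns with the slack column deleted, and then $\time_o-r-1=\time_o-n$. I expect the bookkeeping of $n$ versus $\mathrm{rank}(\H)$ to be the delicate point. I would state it explicitly as the identification $\mathrm{rank}(\H)+1=n$, or else interpret $n$ as the state dimension entering the Wishart degrees of freedom.

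Collecting terms, $E_{\text{gen}}=\sigma^2+\sigma^2\,\mathrm{rank}(\H)/(\time_o-n)$, which is \cref{eq:pop_error_a}.
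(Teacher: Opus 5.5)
Your proposal is correct and is essentially the paper's proof. It uses the same test-then-training averaging, the same $O(\sigma^4)$ truncation of $\sigma^2\|\err\|^2$, and the same SVD reduction $W=V^T\X$ with the inverse-Wishart mean under $\mathrm{rank}(\H)=n-1$; the paper packages that last step as $\mathrm{Cov}(\err)=\frac{\sigma^2}{\time_o-n}(\H\H^T)^{+}$ followed by $\mathrm{Tr}\bigl(\H\H^T(\H\H^T)^{+}\bigr)=\mathrm{rank}(\H)$, whereas you cancel $\Sigma$ directly inside the trace. There are only two minor slips: $\err$ lies in the range (column space) of $(\bar\Z^T)^{+}$, not its row space, though your conclusion $\err\in\mathrm{Col}(\bar\Z)\perp\n$ is right; and in the paper $\H$ keeps all $n$ columns with $\mathrm{rank}(\H)=n-1$, rather than having the slack column deleted, which would make it full column rank.
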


\begin{proof}

In \cref{eq:final_generalization}, the Expected Generalization Error is: 
\begin{equation}
E_{gen}=\sigma^2+\lim_{\time_\star\rightarrow\infty}\frac{\EXP_\Z[\|\Z_\star^T\err\|^2+2\n^T\E_\star \E_\star^T\err]}{\time_\star}
\end{equation}

Plugging \cref{eq:delta_solution} into  $\EXP_\Z[\n^T\E_\star \E_\star^T\err]$:

\begin{align}
    &\EXP_\Z[\n^T\E_\star \E_\star^T\err]\\&=\EXP_\Z[\n^T\E_\star \E_\star^T(\bar\Z^+)^T\E^T\n]\\
    &= Tr(\EXP_\Z[\E_\star \E_\star^T(\bar\Z^+)^T\E^T\n\n^T])\\
    &= Tr(\EXP_\Z[\E_\star \E_\star^T]\EXP_\Z[(\bar\Z^+)^T]\EXP_\Z[\E^T\n\n^T])
\end{align}

Because $\EXP_\Z[\E^T\n\n^T]=0_{\time_o\times m}$, $\EXP_\Z[\n^T\E_\star \E_\star^T\err]=0$.

Expanding $\frac{\EXP_\Z[\|\Z_\star^T\err\|^2]}{\time_\star}$:

\begin{align}
     \frac{\|\Z_\star^T \err\|^2}{\time_\star}&=\frac{\|(\bar{\Z}_{\star}+\E_\star)^T \err\|^2}{\time_\star}\\
     &=\frac{\err^T(\bar{\Z}_{\star}+\E_\star)(\bar{\Z}_{\star}+\E_\star)^T\err }{\time_\star}\\
     &=\frac{\err^T(\bar{\Z}_{\star}\bar{\Z}_{\star}^T+\E_\star \bar{\Z}_\star^T+\bar{\Z}_\star \E_\star^T+\E_\star \E_\star^T)\err }{\time_\star}\\
     &=\frac{\err^T(\H \X_\star  \X_\star ^T\H^T+\time_\star I_m\sigma^2)\err}{\time_\star}
\end{align}

Plugging \cref{eq:delta_solution} into $\sigma^2\err^TI_m\err$:

\begin{align}
\label{eq:ignore_error2}
\sigma^2\err^TI_m\err=\sigma^2\n^T\E \bar\Z^+(\bar\Z^+)^T\E^T\n
\end{align}

Since:
\begin{align}
\sigma^2\err^TI_m\err=\sigma^4\frac{Tr((\H\H^T)^+)}{\time_o-n-1}=\mathcal{O}(\sigma^4\frac{\|\H^+\|^2}{\time_o-n-1})
\end{align}

Since $\lim_{\time_\star\rightarrow\infty} \frac{ \X_\star  \X_\star ^T}{\time_\star}= I_n$:

\begin{equation}
 \lim_{\time_\star\rightarrow\infty}\frac{\|\Z_\star^T \err\|^2}{\time_\star}
= \err^T \H\H^T \err +\mathcal{O}(\sigma^4\frac{\|\H^+\|^2}{\time_o-n-1})
 \end{equation}
Using the cyclic property of the trace \cite{WikipediaTrace}:
\begin{equation}
\err^T \H\H^T \err
= \mathrm{Tr}(\H\H^T\,\err\err^T).
\end{equation}

Taking expectation:
\begin{align}
\EXP_\Z[\err^T \H\H^T \err]
= \mathrm{Tr}\left(\H\H^T \EXP_\Z[\err\err^T]\right)\\
= \mathrm{Tr}\left(\H\H^T \mathrm{Cov}(\err)\right).
\end{align}

Using the covariance result in \cref{cor:cov_err}:
\begin{equation}
\EXP_\Z[\err^T \H\H^T \err]
= \frac{\sigma^2}{\time_o-n}
\mathrm{Tr}\left(\H\H^T (\H\H^T)^+\right).
\end{equation}

Since $\H\H^T (\H\H^T)^+ = P_{\mathrm{Col}(\H)}$ which is the projection matrix on the column space of $\H$ \cite{swewart1977perturbation}, its trace equals the rank:
\begin{equation}
\EXP_\Z[\err^T \H\H^T \err]
= \frac{\sigma^2\,\mathrm{rank}(\H)}{\time_o-n}.
\end{equation}
So the Expected Generalization Error is :

\begin{equation}
 E_{\text{gen}}=\sigma^2(1+
 \frac{\mathrm{rank}(\H)}{\time_o-n})+\mathcal{O}(\sigma^4\frac{\|\H^+\|^2}{(\time_o-n-1)\time_o}).
\end{equation}

\end{proof}

\subsection{Covariance of the First-Order Estimation Error}
\label{sec:cov_err}

\begin{corollary}[Covariance of the First-Order Estimation Error]
\label{cor:cov_err}
Under the setting of \cref{eq:prop_delta_solution_a}, assume
$\bar{\Z}=\H\X$ with the columns of $\X$ drawn i.i.d.\ from
$\mathcal{N}(0,I_\dimx)$, $\mathrm{rank}(\H)=\dimx-1$, and $\time_o>\dimx$.
Then $\err$ has zero mean and covariance
\begin{equation}
\label{eq:cov_err_result}
\mathrm{Cov}(\err)
\;=\;
\frac{\sigma^2}{\time_o-\dimx}\,(\H\H^T)^{+}.
\end{equation}
\end{corollary}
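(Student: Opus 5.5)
The plan is to compute the covariance conditionally on the training states $\X$, and then average over $\X$ using the mean of an inverse Wishart matrix. I will work under the isotropic noise model implicit in the theorem, $\eR=\sigma^2 I_\dimz$, so that the entries of $\E$ are i.i.d.\ $\mathcal{N}(0,\sigma^2)$ and $\E$ is independent of $\X$.

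\textbf{Step 1 (conditional distribution).} By Proposition~\ref{eq:prop_delta_solution_a} we have $\err=-(\bar\Z^T)^{+}\mathbf{w}$ with $\mathbf{w}:=\E^T\n\in\R^{\time_o}$. Since $\|\n\|=1$ and the columns of $\E$ are i.i.d.\ $\mathcal{N}(0,\sigma^2 I_\dimz)$, each entry of $\mathbf{w}$ is $\mathcal{N}(0,\sigma^2)$, the entries are independent, and $\mathbf{w}$ is independent of $\X$. Hence $\EXP[\err\mid\X]=0$, which gives the zero mean. It also gives
\begin{equation*}
\mathrm{Cov}(\err\mid\X)=\sigma^2(\bar\Z^T)^{+}\bigl((\bar\Z^T)^{+}\bigr)^T=\sigma^2(\bar\Z\bar\Z^T)^{+},
\end{equation*}
using the standard identity $(A^T)^+(A^+)=(AA^T)^+$ with $A=\bar\Z$. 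By the law of total covariance, $\mathrm{Cov}(\err)=\sigma^2\,\EXP_\X[(\H\X\X^T\H^T)^+]$.

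\textbf{Step 2 (reduce to a Wishart matrix).} Write the compact SVD $\H=U\Sigma V^T$ with $r=\mathrm{rank}(\H)=\dimx-1$. Here $U\in\R^{\dimz\times r}$ and $V\in\R^{\dimx\times r}$ have orthonormal columns, and $\Sigma$ is positive diagonal. Then
\begin{equation*}
\H\X\X^T\H^T=U\Sigma W\Sigma U^T, \qquad W:=V^T\X\X^TV .
\end{equation*}
Because $V$ has orthonormal columns, the columns of $V^T\X$ are i.i.d.\ $\mathcal{N}(0,I_r)$. Thus $W\sim\mathcal{W}_r(\time_o,I_r)$, and $W$ is almost surely invertible since $\time_o>\dimx>r$. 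I then claim
\begin{equation*}
(U\Sigma W\Sigma U^T)^+=U\Sigma^{-1}W^{-1}\Sigma^{-1}U^T.
\end{equation*}
To verify this, check the four Moore--Penrose conditions directly. They hold because $U^TU=I_r$ and the middle factor $\Sigma W\Sigma$ is invertible; the product of the matrix and its candidate inverse, in either order, is $UU^T$, which is symmetric.

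\textbf{Step 3 (take the expectation).} The standard inverse Wishart mean gives $\EXP[W^{-1}]=I_r/(\time_o-r-1)=I_r/(\time_o-\dimx)$, and this is finite since $\time_o>\dimx$. Therefore
\begin{equation*}
\mathrm{Cov}(\err)=\frac{\sigma^2}{\time_o-\dimx}\,U\Sigma^{-2}U^T=\frac{\sigma^2}{\time_o-\dimx}(\H\H^T)^+,
\end{equation*}
because $\H\H^T=U\Sigma^2U^T$, so $(\H\H^T)^+=U\Sigma^{-2}U^T$. The hypothesis $\mathrm{rank}(\H)=\dimx-1$ enters exactly here: it turns $\time_o-r-1$ into $\time_o-\dimx$.

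The main obstacle is Step 2, namely handling the pseudoinverse of the rank-deficient product cleanly. Everything else is standard once $(\H\X\X^T\H^T)^+$ is rewritten in terms of $W^{-1}$. A secondary point to state explicitly is the assumption $\eR=\sigma^2 I$. Without it, $\mathbf{w}$ has variance $\n^T\eR\n$ instead of $\sigma^2$, and the formula holds with that scalar in place of $\sigma^2$.
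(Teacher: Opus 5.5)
Your proposal is correct and follows essentially the same route as the paper's proof: write $\err=-(\bar\Z^T)^{+}\E^T\n$, use $\EXP[\E^T\n\n^T\E]=\sigma^2 I_{\time_o}$ to get $\mathrm{Cov}(\err)=\sigma^2\,\EXP[(\bar\Z\bar\Z^T)^{+}]$, reduce via the compact SVD of $\H$ to $\EXP[W^{-1}]$ with $W\sim\mathrm{Wishart}_{\dimx-1}(\time_o,I)$, and apply the inverse-Wishart mean $1/(\time_o-\dimx)$. Your additions tighten steps the paper only asserts: conditioning on $\X$ with the law of total covariance, checking the Moore--Penrose conditions for $(U\Sigma W\Sigma U^T)^{+}$ directly, and flagging that the isotropic noise assumption $\eR=\sigma^2 I_\dimz$ is used (the paper simply takes the entries of $\E$ to be i.i.d.\ $\mathcal{N}(0,\sigma^2)$).
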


\begin{proof}
Let $\A=-(\bar{\Z}^T)^{+}$. Then \eqref{eq:delta_solution_a} gives
\begin{equation}
\err = \A\,\E^T\n.
\end{equation}
Since the entries of $\E$ are i.i.d.\ $\mathcal{N}(0,\sigma^2)$ and $\n$
is a deterministic unit vector,
\begin{equation}
\label{eq:enn_iso}
\EXP_{\E}\left[\E^T\n\,\n^T\E\right]=\sigma^2 I_{\time_o}.
\end{equation}
Because $\bar{\Z}$ and $\E$ are independent and $\EXP[\E]=0$, the mean
$\EXP[\err]=0$, and
\begin{equation}
\label{eq:cov_to_AAT}
\mathrm{Cov}(\err)
=\EXP\left[\A\,\sigma^2 I_{\time_o}\,\A^T\right]
=\sigma^2\,\EXP\left[\A\A^T\right].
\end{equation}
Note:
\begin{equation}
\label{eq:AAT_pinv}
\A\A^T=(\bar{\Z}^T)^{+}\big((\bar{\Z}^T)^{+}\big)^{T}=(\bar{\Z}\bar{\Z}^T)^{+}.
\end{equation}
By plugging \eqref{eq:AAT_pinv} into \eqref{eq:cov_to_AAT}, we get:
\begin{equation}
\label{eq:cov_zz_pinv}
\mathrm{Cov}(\err)=\sigma^2\,\EXP\left[(\bar{\Z}\bar{\Z}^T)^{+}\right].
\end{equation}

Since $\bar{\Z}=\H\X$, we have
\begin{equation}
\bar{\Z}\bar{\Z}^T=\H(\X\X^T)\H^T.
\end{equation}
Take the SVD $\H=\mathbf{U}_r\mathbf{\Sigma}_r\mathbf{V}_r^T$ with
$\mathbf{U}_r\in\R^{\dimz\times(\dimx-1)}$, $\mathbf{\Sigma}_r\in\R^{(\dimx-1)\times(\dimx-1)}$
diagonal and invertible, and $\mathbf{V}_r^T\mathbf{V}_r=I_{\dimx-1}$. Let
$\W=\mathbf{V}_r^T\X\in\R^{(\dimx-1)\times\time_o}$. Since the columns of $\X$
are i.i.d.\ $\mathcal{N}(0,I_\dimx)$ and $\mathbf{V}_r^T\mathbf{V}_r=I_{\dimx-1}$, the
columns of $\W$ are i.i.d.\ $\mathcal{N}(0,I_{\dimx-1})$, so
$\W\W^T\sim\mathrm{Wishart}_{\dimx-1}(\time_o,I_{\dimx-1})$ and is
invertible almost surely for $\time_o>\dimx-1$. Hence
\begin{equation}
\bar{\Z}\bar{\Z}^T=\mathbf{U}_r\mathbf{\Sigma}_r\,\W\W^T\,\mathbf{\Sigma}_r\mathbf{U}_r^T,
\end{equation}
and, since $\mathbf{U}_r$ has orthonormal columns and
$\mathbf{\Sigma}_r\W\W^T\mathbf{\Sigma}_r$ is invertible,
\begin{equation}
\label{eq:zz_pinv_svd}
(\bar{\Z}\bar{\Z}^T)^{+}
=\mathbf{U}_r\mathbf{\Sigma}_r^{-1}(\W\W^T)^{-1}\mathbf{\Sigma}_r^{-1}\mathbf{U}_r^T.
\end{equation}
The inverse-Wishart mean \cite{gupta2018matrix} gives
\begin{equation}
\label{eq:iw_mean}
\EXP\left[(\W\W^T)^{-1}\right]=\frac{1}{\time_o-\dimx}\,I_{\dimx-1}.
\end{equation}
Plugging \eqref{eq:iw_mean} into the expectation of
\eqref{eq:zz_pinv_svd}:
\begin{equation}
\label{eq:E_zz_pinv}
\EXP\left[(\bar{\Z}\bar{\Z}^T)^{+}\right]
=\frac{1}{\time_o-\dimx}\,\mathbf{U}_r\mathbf{\Sigma}_r^{-2}\mathbf{U}_r^T
=\frac{1}{\time_o-\dimx}\,(\H\H^T)^{+},
\end{equation}
where the last equality uses the SVD form $(\H\H^T)^{+}=\mathbf{U}_r\mathbf{\Sigma}_r^{-2}\mathbf{U}_r^T$.
By plugging \eqref{eq:E_zz_pinv} into \eqref{eq:cov_zz_pinv}, we obtain
\eqref{eq:cov_err_result}.
\end{proof}

\subsection{Expected Generalization Error Validation}
\label{sec:gen_error_vali}
We empirically validate \cref{eq:pop_error} on the IEEE 14, 30, 57, and 118-bus systems. In each Monte Carlo trial, we generate $\time_o=2\dimz$ training and $\time_\star=1000$ test measurements under the DC model with i.i.d.\ Gaussian noise of standard deviation $\sigma$, estimate $\hat{\n}$ from the training set, and compute $\tfrac{1}{\time_\star}\EXP_{\Z}[\|\Z_\star^T\hat{\n}\|^2]$, averaged over 100 trials per $\sigma$. \Cref{fig:Egen14Valid} shows close agreement between theory and experiment across all four systems for $\sigma\leq 1$, two orders of magnitude above operational PMU noise. The crosses mark the first $\sigma$ at which the relative deviation exceeds 10\%, occurring only at $\sigma>1$ where the dropped second-order terms $\E^T\err$ and $\err^T\err$ become non-negligible.
\begin{figure}
    \centering
    \includegraphics[width=0.85\linewidth]{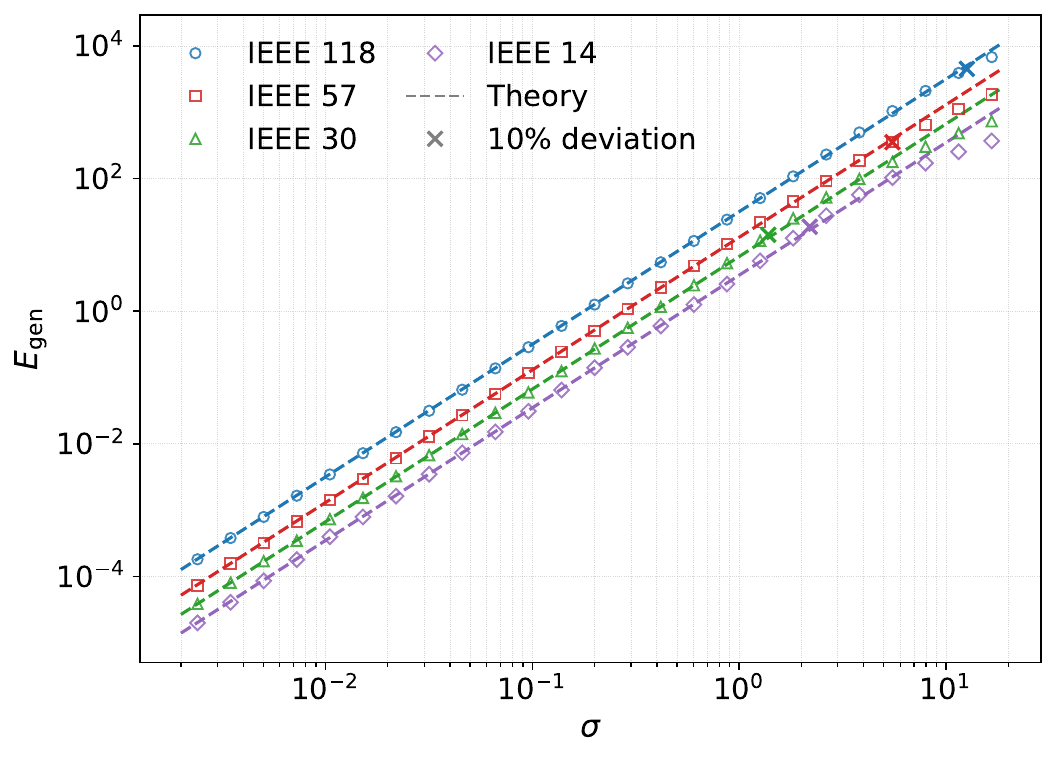}
    \caption{Validation of \cref{eq:pop_error} on IEEE 14, 30, 57, and 118-bus systems. Dashed lines: first-order prediction; markers: empirical $E_{\mathrm{gen}}$ over 100 Monte Carlo trials. Crosses indicate the first $\sigma$ at which the relative deviation exceeds 10\%.}
    \label{fig:Egen14Valid}
\end{figure}

\subsection{Partial observability analysis}
\label{sec: partial_exp}
In network monitoring, communication latencies or sensor failures often result in partial observability, where specific meter information is lost. 
This degradation in data quality directly impacts the detector's performance, as the efficacy of attack detection relies on the precision of null space estimation.
The null space is fundamentally characterized by the network's cycle-space. When link measurements are missing, the corresponding cycles are broken, leading to an incomplete cycle-space, which in turn diminishes the accuracy of the null space estimation. 
According to the generalization error defined in~\cref{eq:pop_error}, the estimation error for a cycle basis is linearly correlated with the cycle length. 
Consequently, shorter cycles yield higher estimation accuracy. 
Therefore, the loss of these small, high-fidelity cycles may reduce the detector’s ability to accurately characterize the null space.

The impact of partial observability on the \ac{csd} performance can be categorized into the following scenarios:

\begin{figure}[htbp]
     \centering
     % Row 1: A and B
     \begin{subfigure}[b]{0.23\textwidth}
         \centering
         \includegraphics[width=\textwidth]{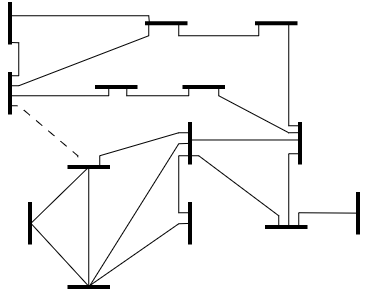}
         \caption{Partially observed IEEE 14-bus system: unobserved links (dashed) are shown for a large cycle.}
         \label{fig:remove_large}
     \end{subfigure}
     \hfill % Adds spacing between A and B
     \begin{subfigure}[b]{0.23\textwidth}
         \centering
         \includegraphics[width=\textwidth]{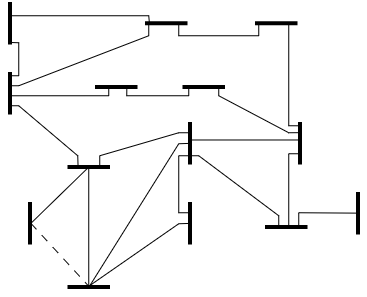}
         \caption{Partially observed IEEE 14-bus system: unobserved links (dashed) are shown for a small cycle.}
         \label{fig:remove_small}
     \end{subfigure}

     % Row 2: C
     \begin{subfigure}[b]{0.5\textwidth}
         \centering
         \includegraphics[width=0.6\textwidth]{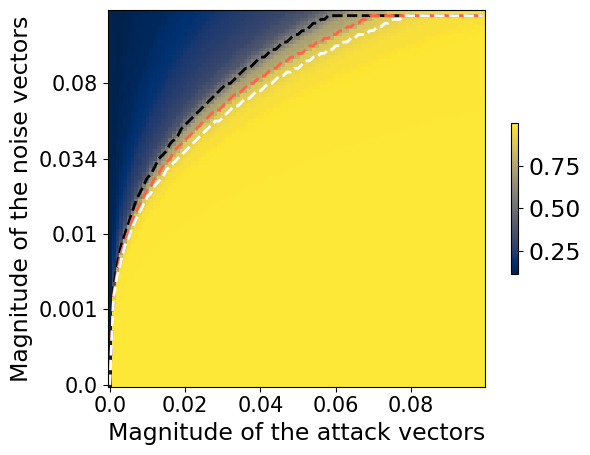}
         \caption{$F_1$ scores (colored heatmap) in the attack-magnitude vs. noise plane for the IEEE 14-bus system. \ac{csd} performance curves corresponding to $F_1$=0.8 are shown for comparison: full observability baseline (red dashed), missed small cycle (white dashed), and missed large cycle (black dashed).}
         \label{fig:remove_f1}
     \end{subfigure}
     
     \caption{Partial observability analysis on IEEE-14 bus system}
     \label{fig:total_figure}
\end{figure}

\begin{enumerate}
    \item Localized in Missing Cycles: If the attack is applied exclusively within the broken (missing) cycles, it becomes mathematically undetectable.
    \item Localized in Remaining Cycles: If the attack is confined to the observable cycles, detection remains feasible.

\item Uniformly Distributed Attack: When an attack is applied uniformly across all links, performance may either improve or degrade depending on the specific cycles lost as follows: 

    \begin{itemize}
    \item Loss of Large Cycles: Performance may counterintuitively improve, as the estimation avoids the high generalization errors associated with longer cycles.
    \item Loss of Small Cycles: Performance may degrade, potentially substantially, if some of the most accurate components of the null space estimation are not captured.
\end{itemize}
\end{enumerate}

While scenarios 1) and 2) are deterministic, scenario 3) demonstrates that the quality of detection under partial observability may be influenced by the topological properties of the remaining sub-graph.

The impact of partial observability on detection performance is illustrated in Figs.~\ref{fig:remove_large} and \ref{fig:remove_small}, which depict missed links (dashed) pertaining to large and small cycles, respectively.
As shown in Fig.~\ref{fig:remove_f1}, the resulting \ac{csd} performance curves for missed small cycle (white dashed) and missed large cycle (black dashed) scenarios are evaluated against a full observability baseline (red dashed).
As expected, the unobservability of a small cycle (white dashed) may lead to a noticeable reduction in detector efficacy, with the performance curve falling below the baseline due to the loss of the high-fidelity information inherent in shorter cycles. In contrast, breaking a large cycle (black dashed) counterintuitively improves performance above the baseline, suggesting that excluding the higher generalization errors associated with longer cycles can refine the overall accuracy of the null space estimation.

\end{document}